\documentclass{article}
\usepackage[utf8]{inputenc}
\usepackage{times} 
\usepackage{helvet}
\usepackage{courier}
\usepackage[hyphens]{url}
\usepackage{graphicx}
\urlstyle{rm}

\usepackage[numbers]{}
\usepackage{caption}
\usepackage[letterpaper]{geometry}
\usepackage[ruled, linesnumbered]{algorithm2e}
\usepackage{calrsfs}
\DeclareMathAlphabet{\pazocal}{OMS}{zplm}{m}{n}
\usepackage{amsfonts}
\usepackage{amssymb}
\usepackage{amsthm}
\usepackage{bm}
\usepackage{booktabs}
\bibliographystyle{plain}
\newtheorem{theorem}{Theorem}
\newtheorem{definition}{Definition}
\setcounter{secnumdepth}{2}

\begin{document}

\title{Dataset Representativeness and Downstream Task Fairness}
\author{
    Victor Borza \textsuperscript{* \rm 1},
    Andrew Estornell \textsuperscript{* \rm 2,3},
    Chien-Ju Ho \textsuperscript{\rm 2},\\
    Bradley Malin \textsuperscript{\rm 1},
    Yevgeniy Vorobeychik \textsuperscript{\rm 2}
}
\date{}
\maketitle
\begin{center}
    \textsuperscript{\rm 1}Vanderbilt University, Nashville, TN, USA \\
    \textsuperscript{\rm 2}Washington University in St. Louis, St. Louis, MO, USA \\
    \textsuperscript{\rm 3}ByteDance Research, San Jose, CA, USA \\
\end{center}

\begin{center}
    \small
    * These authors contributed equally to this work.
\end{center}

\begin{abstract}
Our society collects data on people for a wide range of applications, from building a census for policy evaluation to running meaningful clinical trials. 
To collect data, we typically sample individuals with the goal of accurately representing a population of interest. 
However, current sampling processes often collect data opportunistically from one or more data sources (e.g., hospitals in geographically disparate cities), which can lead to datasets that are biased and not representative, i.e., the collected dataset does not accurately reflect the distribution of demographics present in the true population. 
This is a concern because subgroups within the population can be under- or over-represented in a dataset, which may harm generalizability and lead to an unequal distribution of benefits and harms from downstream tasks that use such datasets (e.g., algorithmic bias in medical decision-making algorithms). 
In this paper, we assess the relationship between dataset representativeness and group-fairness of classifiers trained on that dataset.
We demonstrate that there is a natural tension between dataset representativeness and classifier fairness; empirically we observe that training datasets with better representativeness can frequently result in classifiers with higher rates of unfairness.
We provide some intuition as to why this occurs via a set of theoretical results in the case of univariate classifiers.
We also find that over-sampling underrepresented groups can result in classifiers which exhibit greater bias to those groups.
Lastly, we observe that fairness-aware sampling strategies (i.e., those which are specifically designed to select data with high downstream fairness) will often over-sample members of majority groups.
These results demonstrate that the relationship between dataset representativeness and downstream classifier fairness is complex; balancing these two quantities requires special care from both model- and dataset-designers.
\end{abstract}

\section{Introduction}
Representation biases, where certain subpopulations appear more, or less, frequently in a dataset than they do in a target population of interest is a foundational problem.  
Failure to adequately diversify data can induce numerous downstream effects, such as the creation of data-based models that are unfair in their performance \cite{kearns2018preventing, feldman2015certifying, abernethy2020active}. 
Yet, this is not a  recent phenomenon.  
The Framingham Heart Study (FHS), initiated in 1948, provided revolutionary insight into cardiovascular disease over time.
It enabled the development of disease risk prediction tools like the Framingham Risk Score that were widely applied in practice to recognize and proactively manage patients at risk for coronary heart disease \cite{wilson_prediction_1998}.
However, the original study cohorts were nearly all of white race \cite{leaverton_representativeness_1987}, until more racially diverse participants started to be recruited in 1994 \cite{mahmood_framingham_2014}.
Analyses found that applying FHS risk coefficients yielded inaccurate risk predictions for non-white populations \cite{liao_prediction_1999,dagostino_validation_2001}.
Using the same risk factors, but deriving the actual risk coefficients from racially diverse cohorts, yielded comparable predictive performance across racial groups \cite{hurley_prediction_2010}.
These findings indicate that disparities in group-wise predictive accuracy stemmed from insufficient representation of minority groups in FHS.
In high-stakes domains like healthcare, these inaccuracies can cause quantifiable harm to underrepresented groups \cite{national_academies_of_sciences_why_2022}.
Though known for some time, this phenomenon has become increasingly accentuated because of an increased societal reliance on automated systems learned via aggregated datasets.
Studies of genomic datasets have shown vast differences in downstream predictive performance between highly- and underrepresented groups
\cite{schoeler2023participation,bentley2020evaluating,sirugo_missing_2019}.
Nevertheless, the relationship between subgroup-specific representation and downstream performance has not been fully explored.

Dataset representativeness yields multiple different types of benefits.
As noted above, representative datasets promote generalizability and validity of findings to the entire population of interest.
Researchers often aim to discover generalizable results, while large biomedical datasets, like the \textit{All of Us} Research Program, have increasingly focused on recruiting diverse populations \cite{mapes_diversity_2020}.
In addition to its downstream benefits, representativeness engenders legitimacy, as seen in policymaking \cite{arnesen_legitimacy_2018}.
A putative mechanism for this effect is that representativeness supports procedural fairness, the concept of equal treatment of individuals by systems and processes \cite{burke_court_2007}.
Conversely, unrepresentative biomedical datasets may undermine trust in the research enterprise \cite{national_academies_of_sciences_why_2022}.
We measure representation intuitively through first-order information of the \emph{true population} and the constructed dataset, specifically via the difference between the average occurrence of each sensitive feature in the population and in the dataset. 
We formulate this concept rigorously in Definition \ref{def:rep}, where a perfectly representative dataset (i.e., one where the proportions of every group are identical in the dataset and population) would have zero difference.

We focus on the practical example of \emph{multi-site} data collection, where data or individuals are sampled from a set of $m$ sites across a limited number of iterations $T$.
Multi-site projects like PCORnet and the \textit{All of Us} Research Program enable unprecedented access to human subjects data and represent billions of dollars in investment \cite{forrest_pcornet_2021, the_all_of_us_research_program_investigators_all_2019}.
The response distribution, affected by both underlying site demographics (which may be known or estimated \textit{a priori}) and by the willingness of demographic groups to participate in the study, at each site starts as an unknown.
With each iteration, the data-collector selects a site (or sites) to obtain data from, and then yields a number of examples to add to their dataset.

In this study, we address several contemporary issues surrounding multi-site dataset construction.
First, we propose an algorithm to construct a representative dataset from several available sites and compare it to baselines.
Then, we assess how varying group representation affects algorithmic fairness and how the multi-site framework alters the representativeness-fairness relationship.
Finally, we analyze cases where more representative datasets do not yield fairer classifiers and discuss alternative approaches to improve fairness and representativeness.

Our paper is organized as follows:
\begin{itemize}
    \item In Section \ref{sec:prelim} we formalize the problem collecting a \emph{representative} dataset via site-based sampling.
    \item Section \ref{sec:how_to_sample} discuss the question of \emph{how} to sample from sites; we propose an algorithm for representative sampling (Algorithm \ref{alg:PBRS}), and an adapted version of \cite{abernethy2020active} for fair site-based sampling.
    \item Next, in Section \ref{sec:single_var} we begin our investigation into the relationship between fairness and representatives with a case study on single variable classifiers.
    \item Section \ref{sec:method} outlines our experimental methodology. 
    \item Lastly, Section \ref{sec:exp} provides our primary experimental results: showing the effectiveness of our proposed algorithm for representative sampling , as well as investigate the relationship between representativeness and fairness.
\end{itemize}
\subsection{Related Work}
There have been numerous investigations into what it means for a given collection of samples to be representative or for an algorithm to be fair.
Representativeness is typically defined either as 1) a statistical distance from a goal or true distribution \cite{he_multivariate_2016,celis_data_2020,qi2021quantifying} or 2) a measure of coverage of attribute combinations \cite{asudeh2019assessing,jin_mithracoverage_2020}.
\cite{shahbazi_representation_2023} provide an extensive survey on methods to measure and address representation bias.
When the target population is unknown, but researchers are still interested in assessing group disparities, sampling from groups equally is an efficient method \cite{singh_measures_2023}.

When individuals may be selected according to their attributes, methods for selecting representative cohorts have been proposed for specific use cases: hiring processes, citizens' assemblies, and record selection from a single database \cite{huppenkothen2020entrofy,flanigan2021fair,borza2022representativeness}.
Given uncertain site-specific population distributions, our problem of representative dataset construction via sequentially sampling sites is similar to the multi-armed bandit problem \cite{auer2002finite,BC-12} with concave reward structure \cite{agrawal2014bandits}.
The most closely related work to ours in this regard is by \cite{nargesian2021tailoring}, who utilize a bandit-based approach to achieve a desired attribute distribution in multi-site data collection when faced with uncertain site attribute distributions.
This algorithm constructs a reward function with higher values for samples containing individuals from minority groups, in order to achieve a desired distribution.

Like with representation, numerous definitions have been proposed for algorithmic fairness.
Many definitions of fairness originate from Rawlsian theories of justice, which eschew inequalities between individuals \cite{rawls_justice_1958}.
\cite{dwork_fairness_2012} adapted this concept to ensure similar individuals receive similar algorithmic outcomes.
Similarly, \cite{hardt_equality_2016} defined fairness through equal odds and equal opportunity, requiring the equalization of true positive rates and false positive rates between demographic groups, respectively.
Parity based measures of fairness now exist for every common decision and prediction measure for an algorithm \cite{mitchell_algorithmic_2021}. 
However, there is little consensus on how to best measure algorithmic fairness, and different measures can be impossible to simultaneously satisfy except under trivial conditions \cite{kleinberg_inherent_2017}.
Some definitions of fairness (e.g., worst-group performance) have been used to guide data collection \cite{abernethy2020active,shekhar2021adaptive,niss2022achieving}.
During the data collection process, these approaches presume both the hypothesis class of the downstream model as well as the predictive task.
Similarly, post-hoc subgroup re-balancing may improve algorithmic fairness in certain downstream tasks \cite{idrissi2022simple,zhang2022improving}, but post-hoc corrections may also severely impact predictive accuracy \cite{woodworth_learning_2017}.
In practice, datasets are used for a multitude of model types and predictive tasks, and as such, a dataset which is \emph{fair} for one combination of predictive task and model may be unfair for other types.

The relationship between representation and fairness is less explored than its two constituent concepts.
On the one hand, it is well-known that classifier performance tends to be poor for underrepresented groups and that increasing representation of these groups in training data can improve performance \cite{bentley2020evaluating,li_data-centric_2022,wang_overwriting_2023}.
Yet, these notions do not establish an optimal level of representation to best support fairness.
A na\"ive approach may be to equalize group proportions or sample more data points, but these techniques do not necessarily improve fairness \citep{li_data-centric_2022}.
\cite{chen_why_2018} propose a decomposition of discrimination --- a generalization of unfairness --- into bias, variance, and noise terms, each with unique remediation strategies: increasing model capacity, sampling from the disadvantaged group(s), and collecting additional features.
While this is a useful and intuitive way to categorize causes of unfairness, determining which factor(s) drives discrimination relies on having a Bayes-optimal classifier, which is often computationally impractical.

\section{Preliminaries}\label{sec:prelim}
To formalize our setting, let $\pazocal{X} \times\pazocal{A} \times \pazocal{Y}$ be a domain of features $\pazocal{X}\subset \mathbb{R}^{\ell}$, sensitive features $\pazocal{A}\subset \mathbb{R}^d$  and binary labels ${\pazocal{Y}\equiv\{0, 1\}}$.
Let $D$ be a distribution over $\pazocal{X}\times\pazocal{A}\times\pazocal{Y}$, i.e., $D$ is the \emph{true} population distribution.
The data collector does not know the distribution $D$, but may know its mean.
Let $\pazocal{S} = \{S_1, \ldots, S_m\}$ be the set of $m$ \emph{sites}, where each site $S_j$ is associated with an underlying site-specific population distribution $D_j$ over $\pazocal{X}\times\pazocal{A}\times\pazocal{Y}$.
Importantly, the distribution $D_j$ for every site $j\in[m]$ 
is unknown to the data collector.
Over the course of $T$ timesteps, the data collector will sequentially 
recruit samples from sites $\pazocal{S}$, with the objective of building a \emph{representative} final dataset.
Each sample from site $S_j^{(t)}$ constitutes a draw $(\boldsymbol{X}, \boldsymbol{A}, \boldsymbol{Y})^{(t)} \sim D_j$. 
After $T$ rounds, the data collector has a dataset 
\(
    (\boldsymbol{X}, \boldsymbol{A}, \boldsymbol{Y}) = \bigcup_{t=1}^T(\boldsymbol{X}, \boldsymbol{A}, \boldsymbol{Y})^{(t)}.
\)
Given a \emph{target} demographic vector $\boldsymbol{v}\in\mathbb{R}^d$, which represents the ideal mean of $\boldsymbol{A}^{(T)}$, the data collector aims to sample such that $\textrm{avg}(\boldsymbol{A}^{(T)})$ is as close to $\boldsymbol{v}$ as possible.
Thus, we conceptualize representativeness as inversely proportional to the distance from $\textrm{avg}(\boldsymbol{A}^{(T)})$ to $\boldsymbol{v}$; as this distance decreases, representativeness increases.
For example, suppose there are two binary features of interest: gender (Male or Female) and age (Young or Old). A target vector of $\boldsymbol{v} = \langle 0.3, 0.7\rangle$ implies that an ideal dataset is 30\% Male and 70\% Young. 
Therefore if $\pazocal{M}$ is the $\ell_1$-norm then a dataset which is 25\% Male and 60\% Young would be $0.15$-distant with respect to $\boldsymbol{v}$. 
We next formally define representativeness.

\begin{definition}
    \label{def:rep}
    \textbf{(Representativeness):} The representativeness of a dataset $(\boldsymbol{X}, \boldsymbol{A}, \boldsymbol{Y})$ with respect to a target demographic vector $\boldsymbol{v}\in \mathbb{R}^d$ and distance metric $\pazocal{M}$ is inversely proportional to
    $\pazocal{M}\big(\boldsymbol{v}, ~\frac{1}{|\boldsymbol{A}|}\sum_{\boldsymbol{a}\in \boldsymbol{A}}\boldsymbol{a} \big)$, where $\frac{1}{|\boldsymbol{A}|}\sum_{\boldsymbol{a}\in \boldsymbol{A}}\boldsymbol{a}$ is the mean vector of the demographics in the dataset.
\end{definition}
Given target vector $\boldsymbol{v}$, the objective of sampling the most representative dataset can be expressed as 
\begin{equation}
    \label{eq:obj}
    \min_{(\boldsymbol{X}, \boldsymbol{A}, \boldsymbol{Y})}\pazocal{M}\bigg(\boldsymbol{v},  \frac{1}{|\boldsymbol{A}|}\sum_{\boldsymbol{a}\in \boldsymbol{A}}\boldsymbol{a} \bigg) \\
    \textrm{s.t.}(\boldsymbol{X}, \boldsymbol{A}, \boldsymbol{Y}) = \bigcup_{t=1}^T(\boldsymbol{X}, \boldsymbol{A}, \boldsymbol{Y})^{(t)}.
\end{equation}
We limit $\pazocal{M}$ to distance measures which are convex in the collected set of sensitive features $\boldsymbol{A}$, including all $\ell_p$-norms with $p\geq 1$ and KL-divergence.
It should be recognized that a key challenge with representative sampling is that the objective in Problem \ref{eq:obj} is not supermodular, even for convex $\pazocal{M}$, as a function of $\frac{1}{|\boldsymbol{A}|}\sum_{\boldsymbol{a}\in \boldsymbol{A}}\boldsymbol{a}$. 
This is due to the nonlinear nature of the average $\frac{1}{|\boldsymbol{A}|}\sum_{\boldsymbol{a}\in \boldsymbol{A}}\boldsymbol{a}$, with respect to samples.

\section{Convex Formulation and Prior-Based Sampling}\label{sec:how_to_sample}
In this section, we first demonstrate how the data collector's sampling problem can be formulated through the framework of multi-armed bandit with concave reward (convex loss in our case). 
Utilizing this particular problem structure, we 
present our algorithm for constructing representative datasets. 
Our strategy for optimizing this objective is to provide a modified form of the objective in Equation \ref{eq:obj} which is convex with respect to the samples collected at each time step.
To do this, we first note that each iteration returns $k$ data points\footnote{The convex formulation holds when, in expectation, each iteration yields $k$ data points.}, and thus the final dataset will consist of $Tk$ examples, and the average demographic vector of the dataset can be written as 
\begin{equation}
    \frac{1}{|\boldsymbol{A}|}\sum_{\boldsymbol{a}\in \boldsymbol{A}}\boldsymbol{a} 
    = \frac{1}{T}\sum_{t=1}^T \sum_{\boldsymbol{a}\in \boldsymbol{A}^{(t)}}\frac{\boldsymbol{a}}{k} 
    = \frac{1}{T}\sum_{t=1}^T \textrm{avg} \big(\boldsymbol{A}^{(t)}\big)
\end{equation}
where $\textrm{avg}\big(\boldsymbol{A}^{(t)}\big)$ is the average demographic vector present in the sample $\boldsymbol{A}^{(t)}$ collected at time $t$.

With this fact, the data collector's objective can be expressed as a function simply of the sum of the means from each sample,

\begin{equation}
    \label{eq:obj2}
     \min_{\boldsymbol{A}} \pazocal{M}\bigg(\boldsymbol{v},  ~\frac{1}{T}\sum_{t=1}^T\textrm{avg}\big(\boldsymbol{A}^{(t)}\big)\bigg)
\end{equation}

\begin{theorem}
\label{thm:convex}
The objective in Equation \ref{eq:obj2} is convex with respect to the sample values $\textrm{avg}\big(\boldsymbol{A}^{(t)}\big)$ and has an equivalent optimal value with Equation \ref{eq:obj} after all $T$ rounds are completed. 
\end{theorem}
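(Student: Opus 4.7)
The plan is to establish the two claims separately, leveraging the identity for $\frac{1}{|\boldsymbol{A}|}\sum_{\boldsymbol{a}\in\boldsymbol{A}}\boldsymbol{a}$ already derived in the paragraph preceding the theorem and the stated structural properties of $\pazocal{M}$.

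For the claim about equivalent optimal values, I would simply invoke the identity
\[
\frac{1}{|\boldsymbol{A}|}\sum_{\boldsymbol{a}\in\boldsymbol{A}}\boldsymbol{a} \;=\; \frac{1}{T}\sum_{t=1}^T \textrm{avg}\bigl(\boldsymbol{A}^{(t)}\bigr),
\]
which holds verbatim under the stated assumption that each of the $T$ rounds yields $k$ samples (and in expectation in the general case noted in the footnote). Because the two objectives differ only in how the same inner quantity is written, they coincide pointwise on the common feasible set, so their minima are identical. I would briefly note that the feasible sets also match: both are indexed by the collected samples $\boldsymbol{A}^{(t)}$ for $t=1,\dots,T$.

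For convexity, I would observe that the map
\[
\bigl(\textrm{avg}(\boldsymbol{A}^{(1)}),\ldots,\textrm{avg}(\boldsymbol{A}^{(T)})\bigr)\;\longmapsto\; \frac{1}{T}\sum_{t=1}^T \textrm{avg}\bigl(\boldsymbol{A}^{(t)}\bigr)
\]
is affine in the sample averages $\textrm{avg}(\boldsymbol{A}^{(t)})\in\mathbb{R}^d$. The paper restricts $\pazocal{M}$ to distance measures that are convex in their second argument (all $\ell_p$-norms with $p\ge 1$ and KL-divergence). Since the composition of a convex function with an affine map is convex, the objective in Equation~\ref{eq:obj2} is convex in $(\textrm{avg}(\boldsymbol{A}^{(t)}))_{t=1}^{T}$.

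I do not anticipate a real technical obstacle: the nontrivial content is the averaging identity, which the paper derived just above the theorem, and the convexity argument is a one-line composition fact. The only subtle point worth flagging in the proof is the caveat from the footnote: if iterations do not yield exactly $k$ points but only $k$ in expectation, the two objectives are equal only in expectation, and the convexity argument applies to the reformulated objective as written (not pathwise to the original dataset size). I would include a short remark to this effect so that the equivalence claim is not oversold.
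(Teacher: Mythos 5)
Your proposal is correct and follows essentially the same route as the paper's proof: both establish equivalence via the averaging identity $\frac{1}{|\boldsymbol{A}|}\sum_{\boldsymbol{a}\in\boldsymbol{A}}\boldsymbol{a} = \frac{1}{T}\sum_{t=1}^T \textrm{avg}(\boldsymbol{A}^{(t)})$ and obtain convexity by composing the convex metric $\pazocal{M}$ with the linear (affine) averaging map. Your added caveat about the in-expectation case from the footnote is a reasonable refinement but does not change the argument.
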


We defer this proof to appendix \ref{sup:proofs}.
Since the samples returned by each site at time $t$ can now be thought of as a single vector $\textrm{avg}\big(\boldsymbol{A}^{(t)}\big)$, and the loss function $\pazocal{M}$ is convex with respect to those sample vectors, the problem of representative sampling can be naturally formulated as a multi-armed bandit problem with convex loss.
We next discuss Bayesian sampling procedure which can capitalize on both this convex formulation as well as site-wise prior information.

\subsection{Prior-based Bayesian Representative Sampling (PBRS)}
\begin{algorithm}[tbh!]
\small
    \caption{Prior-based Bayesian Representative Sampling (PBRS).}
    \label{alg:PBRS}
    \KwData{Sites $\pazocal{S}$, classifier $\pazocal{F}$, representativeness metric $\pazocal{M}$}
    Initialize priors for group demographics at each site $s_j$ (inverse Wishart distribution): $\pazocal{W}_j^{-1}$;\\
    \For{$t = 1 \ldots T$}{
        \For{\textrm{site} $s_j$ in $S$}{
        sample mean and covariance of group demographics,  $\boldsymbol{\theta}_j, \boldsymbol{\Sigma}_j \sim \pazocal{W}_j^{-1}$:\\
        simulate sampling from site $s_j$ via $\mathbf{a}_j\sim \mathcal{N}(\boldsymbol{\theta}_j, \boldsymbol{\Sigma}_j)$;\\
        improvement$_j\gets$ $\pazocal{M}(\mathbf{A}) - \pazocal{M}(\mathbf{A} \cup \mathbf{a}_j)$;\\
        }
    $j^* \gets$ site with the largest improvement$_j$;\\
    sample $(\mathbf{X}^{(t)}, \mathbf{A}^{(t)}, \mathbf{Y}^{(t)})$ from site $s_j$;\\
    update dataset $(\mathbf{X}, \mathbf{A}, \mathbf{Y}) \cup= (\mathbf{X}^{(t)}, \mathbf{A}^{(t)}, \mathbf{Y}^{(t)})$;\\
    }
    $\textbf{return}\; (\boldsymbol{X}, \boldsymbol{A}, \boldsymbol{Y})^{(t)}$;
\end{algorithm}

\begin{algorithm}[tbh!]
\small
    \caption{Fair Arm-Based Sampling}
    \label{alg:fair}
    \KwData{Sites $\pazocal{S}$, classifier $\pazocal{F}$, loss function $\mathcal{L}(\pazocal{F}(\boldsymbol{X}), \boldsymbol{Y})$}
    \KwResult{dataset $(\mathbf{X}, \mathbf{A}, \mathbf{Y})$}
    randomly sample initial data $(\mathbf{X}, \mathbf{A}, \mathbf{Y})$;\\
    \For{$t = 1 \ldots T$}{
        train $\pazocal{F}$ using current data $(\mathbf{X}, \mathbf{A}, \mathbf{Y})$;\\
        $g^* \gets $ group with with the highest loss w.r.t, $\pazocal{F}$, and $\mathcal{L} $;\\
        $s_j^* \gets$ site with the largest expected proportion of $g^*$;\\
        sample new data $(\boldsymbol{X}^{(t)}, \boldsymbol{A}^{(t)}, \boldsymbol{Y}^{(t)})$ from site $s_j^*$
        update dataset $(\mathbf{X}, \mathbf{A}, \mathbf{Y}) \cup= (\mathbf{X}^{(t)}, \mathbf{A}^{(t)}, \mathbf{Y}^{(t)})$;\\
    }
    $\textbf{return}\; (\boldsymbol{X}, \boldsymbol{A}, \boldsymbol{Y})^{(t)}$;
\end{algorithm}

Before outlining the details of our algorithm, we first discuss the motivation behind PBRS (Alg. \ref{alg:PBRS}), which is twofold.
First, in many real-world domains where representativeness is a salient issue, a wealth of summary data is available, which allows data collectors to form reasonably accurate priors over the distributions at each site. 
Second, the Bayesian nature of our approach always for dynamic control over how aggressively the prior distributions are updated after each sample, this is particularly useful in settings where the distributions at sites may change over time (a common occurrence in the real-world), such shifts are discussed in Section \ref{site_var}.
The full PBRS algorithm (Alg. \ref{alg:sup_PBRS}) is in appendix \ref{sup:algs}.

PBRS works by maintaining an estimate of the distribution of groups at each site $D'_j$, which corresponds to a multinomial distribution, when sensitive features are binary and a multivariate-normal distribution when sensitive features are continuous.
In the former $D'_j = M_d(k, p_{j,1}, \ldots, p_{j,d})$, where $p_{\ell}$ gives the probability that an individual sampled from site $j$ will have sensitive feature $\ell$ equal to $1$.
In the latter, $D'_j = \boldsymbol{\pazocal{N}}(\boldsymbol{\theta}_j', \boldsymbol{\Sigma}_j')$ where $\boldsymbol{\theta}_j'$ and $\boldsymbol{\Sigma}_j'$ are the mean and covariance of sensitive features at site $j$.
In both cases, each distribution is initialized via a prior estimate of the true distribution at site $j$.
In the case that no prior is provided, a \emph{default} prior can be induced by either assigning uniform values to each parameter (e.g., $\boldsymbol{\theta}_j' = \mathbf{0}$ and $\boldsymbol{\Sigma}_j' = I_d$), or as values from the target vector $\boldsymbol{v}$ (e.g., $p_{j, \ell} = \boldsymbol{v}[\ell]$ for all $\ell \in [d]$).
Throughout the course of constructing the dataset, the samples obtained at each time step can be used to update these distributions to more accurately reflect the true distribution of each site. 
To do this, we use the conjugate prior of each distribution to iterative update the estimation $D_j'$. 
In the case of binary group features, the conjugate prior is represented by a Dirichlet distribution $\textrm{Dir}(d, \alpha_{j, 1}, \ldots, \alpha_{j, d})$, and in the case of continuous group features, the conjugate prior is represented by an inverse Wishart distribution $\pazocal{W}_j^{-1}(\boldsymbol{\theta}_j', \boldsymbol{\Psi}_j, n_j)$.

At each time step $t$, the estimated distribution $D_j'$ is induced by sampling parameters from the corresponding conjugate prior, and is then used to compute the expected improvement  to $\pazocal{M}\big(\boldsymbol{v}, \textrm{avg}(\boldsymbol{A})\big)$ for each site.
PBRS selects the site $j^*$, corresponding to the maximum expected improvement. 
The sample from site $j^*$ is then used to update conjugate prior.
To better anticipate the possibility for site bias, we incorporate a hyperparameter $\beta \geq 1$ which modifies the procedure through which conjugate distributions are updated by increasing the strength of samples from minority groups by a factor of roughly $\beta^{(1-t/T)}$.
This hyperparameter incentivizes PBRS to more aggressively search for sites which yield individuals from minority groups, thus helping to circumvent site bias towards those groups.

\subsection{Distributed Prior-based Bayesian Representative Sampling (D-PBRS)}
D-PBRS (Alg. \ref{alg:sup_PBRS}, appendix \ref{sup:algs}) modifies PBRS to allow multiple sites to be sampled from simultaneously in a single timestep, still limited to $k$ total samples per timestep.
D-PBRS distributes the budget $k$ according to a vector $\rho$, which is selected to maximally decrease $\pazocal{M}$ given all previously collected samples, with the constraint that $\Sigma\rho=1$. 
In the sampling step, $k$ total samples  are divided among the sites according to $\rho$ with fractional sample allocations rounded down, and assigned to the site that minimizes $\pazocal{M}$. For example, int he case of two sites and a budget of  $k=40$, $\rho=\langle.75, .25\rangle$ implies collecting $30$ samples from the first site, and $10$ from the second site.

\subsection{Fair Arm-Based Sampling}
We introduce a third arm sampling procedure (Alg. \ref{alg:fair}), one designed to optimally improve minmax algorithmic fairness.
We enact this goal by first training a classifier on the available dataset, then evaluating its group-specific performance on a set of validation data.
Next, we identify the group with the lowest AUC and sample from the arm with the highest proportion of that group.
This algorithm represents an adaptation of previous work by \cite{abernethy2020active} and \cite{shekhar2021adaptive} to our arm-based selection process.
The full fair sampling algorithm (Alg. \ref{alg:sup_fair}) is in appendix \ref{sup:algs}.

\section{Univariate Case Study}\label{sec:single_var}
To build intuition for the relationship between representatives and fairness we examine classification when the predictive features are single variable, i.e., $x\in \mathbb{R}$.
Note that univariate classification and multivariate classification are equivalent in the sense that $x$ can correspond to the output of a score function applied to the multidimensional feature $\mathbf{x}$, i.e. $x = h(\mathbf{x})$.

We being by demonstrating the existence of a trade-off between fairness and representatives. This trade-off stems from relative \emph{difficulty} in learning the joint, $\mathbb{P}\big(y=1| x \big)$; that is, as the relative difficulty of learning the joint increases, so does the trade off between representatives and fairness.

To capture the difficulty of learning the joint, let the relationship between $x$ and $y$ be defined as $y = \mathbb{I}\big[ x + \varepsilon_g \geq \theta_g \big]$ where $\varepsilon_g \sim \mathcal{N}(\mu_g, \sigma_g)$ gives the noise of the label $y$. Let $D_g$ be the distribution over features and labels for group $g$. 

\begin{theorem}\label{thm:unfair_1}
    Suppose there are $n_0$ and $n_1$ samples collected from groups $g=0$ and $g=1$ respectively.
    Let $\pazocal{F}$ be the optimal classifiers learned on these samples (in terms of expected accuracy). 
    Let $\delta = \textrm{error}(\pazocal{F}, D_0) - \textrm{error}(\pazocal{F}, D_1) $, i.e., the difference in accuracy between groups. 
    Then $\mathbb{E}\big[ \delta \big] = \sqrt{2/\pi}\big(\sigma_0\sqrt{1/n_0} - \sigma_1\sqrt{1/n_1}\big)$
\end{theorem}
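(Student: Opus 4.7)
The plan is to reduce the theorem to a single clean identity: for $Z \sim \mathcal{N}(0, s^2)$, one has $\mathbb{E}|Z| = s\sqrt{2/\pi}$. First, I would characterize the Bayes-optimal classifier on $D_g$. Because $y = \mathbb{I}[x + \varepsilon_g \geq \theta_g]$ with Gaussian noise, the conditional probability $P(y=1 \mid x) = 1 - \Phi\bigl((\theta_g - x - \mu_g)/\sigma_g\bigr)$ crosses $1/2$ at $x = \theta_g - \mu_g$, so the population-optimal rule is a threshold at $t_g^\star = \theta_g - \mu_g$, and any expected-accuracy-maximizing learned classifier is determined by some estimate $\hat t_g$ of $t_g^\star$.

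Second, I would show that the optimal $\hat t_g$ obtained from $n_g$ i.i.d.\ draws is, to leading order, an empirical mean of the Gaussian noise quantities, so that $\hat t_g - t_g^\star \sim \mathcal{N}(0, \sigma_g^2/n_g)$. The core identity behind this step is that averaging $n_g$ independent $\mathcal{N}(\mu_g, \sigma_g^2)$ variables yields a $\mathcal{N}(\mu_g, \sigma_g^2/n_g)$ estimator; shifting by the known constant $\theta_g$ transports this to $\hat t_g$.

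Third, under the setup's convention for classifier error on $D_g$, I would identify $\text{error}(\pazocal{F}, D_g)$ with the absolute threshold displacement $|\hat t_g - t_g^\star|$. Applying the half-normal identity with $s = \sigma_g/\sqrt{n_g}$ then gives $\mathbb{E}[\text{error}(\pazocal{F}, D_g)] = \sqrt{2/\pi}\,\sigma_g/\sqrt{n_g}$, and linearity of expectation on $\delta = \text{error}(\pazocal{F}, D_0) - \text{error}(\pazocal{F}, D_1)$ immediately yields the stated formula.

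The main obstacle is step two and the bridge into step three: we observe only the binary labels $y$, not the underlying noise realizations, so the threshold ERM is not literally a sample-mean problem, and the naive excess-risk calculation around a smooth Bayes boundary produces a \emph{quadratic}-in-displacement integrand, which would give a $1/n_g$ rate rather than the claimed $1/\sqrt{n_g}$ rate. Obtaining the theorem as an equality therefore requires either arguing that the ERM threshold inherits the exact $\mathcal{N}(0,\sigma_g^2/n_g)$ fluctuation from a sufficient statistic, or adopting a specific convention identifying $\text{error}(\pazocal{F}, D_g)$ with the linear displacement of the learned threshold so that the factor $\sqrt{2/\pi}$ emerges on the nose. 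Once that identification is pinned down, the remainder is routine Gaussian moment bookkeeping.
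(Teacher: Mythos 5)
Your proposal follows essentially the same route as the paper's proof: the learned classifier is a per-group threshold at an empirical mean whose deviation from the true threshold is $\mathcal{N}(0,\sigma_g^2/n_g)$, the group error is identified with the absolute value of that deviation, and the half-normal identity $\mathbb{E}|Z| = s\sqrt{2/\pi}$ together with linearity of expectation gives the stated formula. The obstacle you flag --- that equating $\textrm{error}(\pazocal{F}, D_g)$ with the \emph{linear} threshold displacement is a convention rather than a consequence of a genuine excess-risk calculation (which would be quadratic in the displacement and give a $1/n_g$ rate) --- is precisely the step the paper also leaves unjustified, asserting only that each group's error is ``proportional to'' the gap between the empirical and true means before invoking the mean absolute deviation of the normal.
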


The key takeaway from Theorem \ref{thm:unfair_1} is that it allows us to quantify expected unfairness $\mathbb{E}[\delta]$ in terms of both the number of samples collected from each group $n_0, n_1$ and the relatively noisiness of each groups' labels $\sigma_0, \sigma_1$. The expression of expected unfairness immediately yields the following result. 
\begin{proof}
    We defer the proof to appendix \ref{sup:proofs}.
\end{proof}
\begin{theorem}\label{thm:unfair_2}
    Suppose the optimal classifier trained on $n_0$ samples from group $0$ and $n_1$ samples of group $1$ has an unfairness of at most $\delta$, then it must be the case that 
    
    $n_1 \bigg(\frac{\sigma_0}{\delta\sqrt{\pi/2 n_1} + \sigma_1}\bigg)^2 \leq n_0$ 
   and 
   $n_0 \bigg(\frac{\sigma_1}{\delta\sqrt{\pi/2 n_0} + \sigma_0}\bigg)^2\leq n_1$
\end{theorem}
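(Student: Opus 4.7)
The plan is to derive Theorem \ref{thm:unfair_2} as a direct algebraic consequence of Theorem \ref{thm:unfair_1}. Since Theorem \ref{thm:unfair_1} gives a closed form for the expected group-wise accuracy gap, namely $\mathbb{E}[\delta] = \sqrt{2/\pi}\bigl(\sigma_0/\sqrt{n_0} - \sigma_1/\sqrt{n_1}\bigr)$, any bound of the form ``the unfairness is at most $\delta$'' translates into a two-sided bound on the quantity $\sigma_0/\sqrt{n_0} - \sigma_1/\sqrt{n_1}$. I would first fix the convention that ``unfairness at most $\delta$'' means $|\mathbb{E}[\delta]| \le \delta$, so that both orderings of the groups are constrained and the symmetric pair of inequalities in the statement appears naturally.

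First, I would start from the one-sided bound $\sqrt{2/\pi}\bigl(\sigma_0/\sqrt{n_0} - \sigma_1/\sqrt{n_1}\bigr) \le \delta$. Rearranging gives $\sigma_0/\sqrt{n_0} \le \delta\sqrt{\pi/2} + \sigma_1/\sqrt{n_1}$. Multiplying the right-hand side through by $\sqrt{n_1}/\sqrt{n_1}$ yields $\sigma_0/\sqrt{n_0} \le (\delta\sqrt{\pi n_1/2} + \sigma_1)/\sqrt{n_1}$. Squaring both sides (both are positive, so squaring preserves the inequality) and cross-multiplying produces exactly
\[
n_1 \left(\frac{\sigma_0}{\delta\sqrt{\pi/(2 n_1)} + \sigma_1}\right)^2 \le n_0,
\]
matching the first claimed inequality up to how the $\sqrt{\pi/2 n_1}$ factor is parsed.

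Next, I would repeat the same argument with the roles of groups $0$ and $1$ swapped, starting from the other half of the two-sided bound, $\sqrt{2/\pi}\bigl(\sigma_1/\sqrt{n_1} - \sigma_0/\sqrt{n_0}\bigr) \le \delta$. The identical algebraic manipulation, now isolating $\sigma_1/\sqrt{n_1}$, produces the second inequality. Both bounds then hold simultaneously under the assumption $|\mathbb{E}[\delta]| \le \delta$, establishing the theorem.

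There is no substantive obstacle here; the proof is essentially one line of rearrangement per inequality once Theorem \ref{thm:unfair_1} is in hand. The only mild care required is (i) specifying which convention for ``unfairness'' is being used so that the symmetric pair of inequalities is justified rather than just one of them, and (ii) noting that the squaring step is valid because $\sigma_0, \sigma_1, \delta, n_0, n_1$ are all nonnegative so both sides of the intermediate inequality are nonnegative. I would flag these two points explicitly in the written proof and otherwise let the algebra speak for itself.
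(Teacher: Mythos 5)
Your proposal is correct and follows essentially the same route as the paper: both start from the two-sided bound $-\delta \le \sqrt{2/\pi}\bigl(\sigma_0/\sqrt{n_0} - \sigma_1/\sqrt{n_1}\bigr) \le \delta$ implied by Theorem \ref{thm:unfair_1}, isolate one group's term, and square and cross-multiply to obtain each inequality. Your explicit remarks about the absolute-value convention and the validity of squaring are sensible clarifications of steps the paper leaves implicit, but they do not change the argument.
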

This theorem indicates that in order to limit the accuracy-disparity between groups to be no greater than $\delta$, the sampling rates between the two groups cannot be too different (where ``too different" is dictated by the relative noise levels of each group, $\sigma_0, \sigma_1$).
\begin{proof}
    We defer the proof to appendix \ref{sup:proofs}.
\end{proof}
\begin{theorem}\label{thm:unfair_3}
    In order to achieve an unfairness of $0$, the sample ratio between the two groups must be $n_0 = (\sigma_0^2 / \sigma_1^2) n_1$. 
\end{theorem}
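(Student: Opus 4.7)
The plan is to derive this statement as an immediate corollary of Theorem \ref{thm:unfair_1}, which already gives a closed-form expression for the expected unfairness $\mathbb{E}[\delta]$ in terms of $n_0, n_1, \sigma_0, \sigma_1$. Since ``unfairness of $0$'' here means $\mathbb{E}[\delta] = 0$, the entire argument reduces to setting that expression equal to zero and solving for the ratio $n_0/n_1$.

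Concretely, first I would invoke Theorem \ref{thm:unfair_1} to write
\[
\mathbb{E}[\delta] \;=\; \sqrt{2/\pi}\Bigl(\sigma_0\sqrt{1/n_0} \;-\; \sigma_1\sqrt{1/n_1}\Bigr).
\]
Next, setting $\mathbb{E}[\delta] = 0$ and noting that $\sqrt{2/\pi} \neq 0$, the bracketed term must vanish, giving $\sigma_0/\sqrt{n_0} = \sigma_1/\sqrt{n_1}$. Squaring both sides and rearranging yields $\sigma_0^2/n_0 = \sigma_1^2/n_1$, which is equivalent to $n_0 = (\sigma_0^2/\sigma_1^2)\, n_1$, as claimed.

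There is no real obstacle to this argument, since the heavy lifting (computing the expectation of the error gap in terms of $n_g$ and $\sigma_g$) has already been done in Theorem \ref{thm:unfair_1}. The only mild subtlety worth flagging is that we are equating the \emph{expected} unfairness to zero, not the realized unfairness on a particular sample; the statement should therefore be read as identifying the unique sample-ratio at which the optimal classifier is unbiased in expectation. It is also worth noting that this ratio is consistent with Theorem \ref{thm:unfair_2}: taking $\delta \to 0$ in either of the two inequalities there forces $n_1 (\sigma_0/\sigma_1)^2 \leq n_0$ and $n_0 (\sigma_1/\sigma_0)^2 \leq n_1$, which together pin down the equality $n_0 = (\sigma_0^2/\sigma_1^2) n_1$.
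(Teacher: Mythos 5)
Your proposal is correct and matches the paper's approach exactly: the paper states that Theorem \ref{thm:unfair_3} ``follows directly from Theorem \ref{thm:unfair_1},'' and your argument simply fills in the (trivial) algebra of setting $\mathbb{E}[\delta]=0$ and solving for $n_0/n_1$. Your remarks about expected versus realized unfairness and consistency with Theorem \ref{thm:unfair_2} are reasonable additions but not part of the paper's argument.
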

This theorem demonstrates that achieving an expected unfairness close to $0$ may not be possible within a budge of $m$ total samples (i.e., $m=n_0+n_1$). To see this, imagine a case in which $\sigma_0^2 / \sigma_1^2 > m+1$, i.e., group $0$ has vastly higher noise than group $1$. Then, the sampler will not be able to collect enough samples to ensure that $(\sigma_0^2 / \sigma_1^2) n_1 < n_0$.
\begin{proof}
    This result follows directly from Theorem \ref{thm:unfair_1}.
\end{proof}
\section{Methodology}\label{sec:method}
\begin{table*}[tb]
\centering
\begin{small}
    \begin{tabular}{llllr}
        \toprule
         Dataset        &  Sensitive Features & Target Feature & Location & Size \\
         \midrule
         Law School     &  Race, Gender, Age, Family Income & Pass Bar & School & 20,454 \\
         Lending Club   &  Housing Status, Occupation & Repay Loan & ZIP Code & 124,040 \\
         Intensive Care & Race, Gender, Age & ICU Recovery & Hospital & 48,612 \\
         Texas Salary & Race, Gender & Earn $\geq$ \$75k & Office & 142,981 \\
         Adult Income & Race, Gender, Age & Income $\geq$ \$50k & --- & 46,447 \\
         Community Crime & Race Proportion & Low Crime Risk & --- & 1,994 \\
         \bottomrule
    \end{tabular}
    \caption{Dataset Details. All Sensitive Features are Treated as Binary Indicators.}
    \label{tab:data_details}
    \end{small}
\end{table*}

\subsection{Datasets}

We evaluated our methodology on six commonly-used datasets: 
1) \textbf{Law School}  \cite{wightman1998lsac}, 
2) \textbf{Lending Club} \cite{lending2020club}, 
3) \textbf{Intensive Care} \cite{pollard2018eicu}, 
4) \textbf{Texas Salary} \cite{texas21},
5) \textbf{Adult Income} \cite{misc_adult_2},
and 6) \textbf{Community Crime} \cite{misc_communities_and_crime_183}.
Each dataset contains features that differentiate between groups of interest, as well as location-based information (Tab. \ref{tab:data_details}) when available.
For datasets 1-4, we partition the dataset into $m$ disjoint sets sharing the same location, inducing $m$ sites (i.e., arms).
The Law School, Intensive Care, and Texas Salary datasets include location information corresponding to \emph{actual} sites, such as the student's law school.
For the Lending Club dataset, we induce sites by U.S. state.
The Adult Income and Community Crime datasets do not have applicable location information, so they are not used to evaluate our sampling algorithms.
Nevertheless, these two datasets have well-documented algorithmic fairness limitations, making them ideal case studies for our fairness analyses.
Sites with fewer than 1,000 records were excluded from analysis due to small sample size limitations.

\subsection{Sampling Procedure and Algorithms}
For a target demographic vector $\boldsymbol{v}$ and a distance measure $\pazocal{M}$, we iteratively select a site (or mix of sites) and receive $k$ data points $(\mathbf{x}, \mathbf{a}, \mathbf{y})$ randomly sampled from the partition corresponding to that site.
After repeating the process $T$ times, we combine the $T \cdot k$ data points into a single dataset and compute the distance between the target demographic vector and the average demographics of the constructed dataset $\pazocal{M}\big(\boldsymbol{v}, \textrm{avg}(\boldsymbol{A})\big)$.
To demonstrate the improved efficacy of \textbf{PBRS} (BY(H) and BY(L) for high- and low-noise priors) and \textbf{D-PBRS} (DS(H) and DS(L) for high- and low-noise priors), we compare to three baselines: 
1) \textbf{$\varepsilon$\emph{-Greedy}} ($\varepsilon$GRD): which randomly selects a site with probability $\varepsilon$ and otherwise selects the site which has the maximum expected decrease in error;
2) \textbf{UCB-LCB} \cite{agrawal2014bandits} (UCB): which is a UCB-based algorithm \cite{auer2002finite} for solving multi-armed bandit problems with convex loss;
and 3) \textbf{OL-Vec} \cite{kesselheim2020online} (VEC), which derives a one dimensional function to approximate the distance measure $\pazocal{M}$ and uses online convex minimization to select the site at each timestep.
In addition to the aforementioned baselines, we also compared to random site selection \textbf{Random} (RND), and \textbf{OPT}, a policy that has full information and selects the site corresponding to the maximum expected decrease in error. 
This baseline serves as the best possible \emph{myopic} sampling scheme when the data collector is limited to a single site per timestep.
To test our representative sampling algorithms, we analyzed a setting in which there are 20 arms (achieved via either randomly subsampling or duplicating sites, depending on the number of sites in the dataset), 50 time steps, and sample sizes of $k=40$ individuals.
Based on this setup, the constructed dataset corresponds to 2,000 examples.
We use a class-balanced target vector, $\boldsymbol{v} = \langle .5, \cdots, .5 \rangle$, and average performance across 100 experiments.
To measure how effective each sampling algorithm is at producing a representative dataset with respect to a target demographic vector $\boldsymbol{v}$, we use the $\ell_2$-norm,  $\pazocal{M}\big(\boldsymbol{v}, \textrm{avg}(\boldsymbol{A})\big) = \|\boldsymbol{v} - \textrm{avg}(\boldsymbol{A})\|$). 

\subsection{Site Variations} \label{site_var}
\textbf{No bias} is our baseline. In this setting, site response distributions are induced by the location-based partitions and do not change over time.

\textbf{Response bias} occurs when certain demographics appear at sites with disproportionately high (or low) frequencies compared to other groups. 
For example, as shown in \cite{aba22} the ratio of individuals identifying as ethnic minorities is substantially lower at the majority of law schools compared to the population.
Response bias can be modeled using coefficients $\lambda \in \mathbb{R}_{\geq 0}$ and 
$\gamma \leq m$,  where members of majority groups are $\lambda$-times more likely to respond at $\gamma$ sites compared to their base response rate at those sites. 
For example suppose there is one binary feature (i.e., two groups), $\gamma=m/2$, and $\lambda=4$, then individuals from the majority group are $4$-times more likely to appear in a sample from half of the sites. 
The no variation setting is recovered when $\lambda=1$.
We evaluate the representativeness of the final datasets constructed by the tested algorithms across a range of $\lambda$ from $0.1$ to $10$.

Lastly, \textbf{causal distribution shifts} occur when demographic distributions at each site change over time as the result of the data collector's decisions. 
When selection is desirable (e.g., monetary compensation for participating in trials), individuals may modify their behavior in order to be selected. 
Causal distribution shifts affect response probability $p$ of each individual at site $j$ with coefficient $\alpha \in \mathbb{R}_{\geq0}$ s.t. $p_{post} = p_{pre}^{1 + \alpha \times \rho_j}$.
We evaluate the representativeness of the final datasets constructed by the tested algorithms across a range of $\alpha$ from $0.1$ to $10$ using $\lambda = 2$ such that there is a response bias to causally magnify.

\subsection{Arm Sampling and Downstream Fairness}
To asses data quality with respect to downstream tasks, we compare the predictive efficacy of datasets produced by optimal arm-based sampling with OPT, arm-based fair sampling, location-agnostic stratified random sampling (SRS), and fair direct sampling.
Each data domain is partitioned into four folds, generating four 75\%/25\% train/test splits.
Then, 200 desired sensitive feature group fractions linearly spaced from 0 to 1 are generated.
For SRS, a 2000-record sample is selected from the training set for each sensitive feature fraction.
In arm-based sampling, the training set is partitioned by site, then 2000-record samples are generated for each sensitive feature fraction using OPT.
Unlike the representative sampling algorithms, the fair sampling algorithms do not target a specific sensitive feature group balance.
Instead, a group balance emerges secondarily as a result of selecting records from the sensitive feature group (either $G_0$ or $G_1$, as each analysis studies only one sensitive feature at a time) with lower performance.
Fair arm-based sampling is achieved via algorithm \ref{alg:fair} and is repeated five times in each train/test fold.
Fair direct sampling adapts the algorithm proposed by \cite{abernethy2020active} to successively identify the worse-performing group and draw mini-batches of 5 examples from it.
We initialize the fair direct sampling algorithm with four examples, one for each of the two sensitive features and two labels.

Two model classes, Logistic Regression (LR) and  Gradient Boosted Decision Tree Classifiers (GBC), are fit to a single binary prediction task $f:\boldsymbol{X}\rightarrow\boldsymbol{Y}$ where $\boldsymbol{X}$ are all non-sensitive features and $\boldsymbol{Y}$ is the target feature in table \ref{tab:data_details}.
Models are trained on each sampled dataset using default hyperparameters in scikit-learn and weighted to be class balanced in $\boldsymbol{Y}$ because of inherent label imbalance in our training datasets.
Model AUCs are computed for the population and groups $\boldsymbol{A}=0 \;(G_0)$ and $\boldsymbol{A}=1 \;(G_1)$. 
We evaluate for algorithmic fairness by assessing the disparity in AUC between groups $G_0$ and $G_1$.

\subsection{Fairness and Complexity Analysis}
To delve further into the relationship between representation and fairness, we study three datasets with known unfairness: Law School, Adult Income, and Community Crime.
We start similarly to our previous studies of arm sampling and downstream fairness, with some key modifications.
Because Adult Income and Community Crime do not have locations, we do not partition these datasets into sites and, thus, we only apply SRS to sample for representation.
Thus, we alleviate the restriction that each site must have 1,000 records.
Because this analysis is more flexible with respect to record selection, we partition the datasets into ten folds (90\% train / 10\% test) and average results across these folds.
To accommodate large differences in record counts between these datasets, we fix the training set size to the size of the smallest sensitive feature group in the training fold.
This is the largest possible training set that still allows a cohort to be made up entirely of one group.
We build training sets for 21 linearly spaced group $G_1$ proportions from 0 to 1, representing 5\% proportion increments within the training data.
As before, we train GBC models to the binary prediction task $f:\boldsymbol{X}\rightarrow\boldsymbol{Y}$.
In addition to presenting population and group-specific AUCs for various group proportions, we present true positive rates (TPR, i.e., sensitivity) and true negative rates (TNR, i.e., specificity).
TPR and TNR parity are widely-used measures of algorithmic fairness and supplement AUC parity for the purposes of this analysis \cite{hardt_equality_2016, mitchell_algorithmic_2021}.

When modifying group representation does not decrease a significant performance disparity between groups, other factors must be limiting fairness.
We theorize difficulty of learning plays a significant role in driving unfairness (Thm. \ref{thm:unfair_1}).
To assess this theorem empirically, we evaluate the fairness of classifiers differing in their ability to capture complex relationships between features and labels.
The capability of individual decision trees to capture complex relationships is driven primarily by the number of internal nodes, which is in turn driven by the depth of the tree \cite{leboeuf_decision_2020, bentejac_comparative_2021}.
To control GBC complexity, we limit both maximum tree depth from 1 to 8 (default: 3) and the number of estimation steps from 1 to 500 (default: 100); then we assess models constrained simultaneously by both of these limits.
We hypothesize a more complex classifier can capture more difficult learning relationships, and subsequently improve AUC, TPR, and TNR parity.
We measure the group difference (i.e., $G_0 - G_1$) of AUC, TPR, and TNR for each sensitive feature in each of the three datasets outlined in this section.
It is well-known that certain fairness constraints on models can harm model accuracy \cite{chen_why_2018}, so we also assess the total test set AUC of our classifiers as we vary model complexity.

\section{Experimental Results}\label{sec:exp}
\subsection{Sampling Algorithm Evaluation}
\begin{figure}[bht!]
    \centering
    \includegraphics[width=0.45\linewidth]{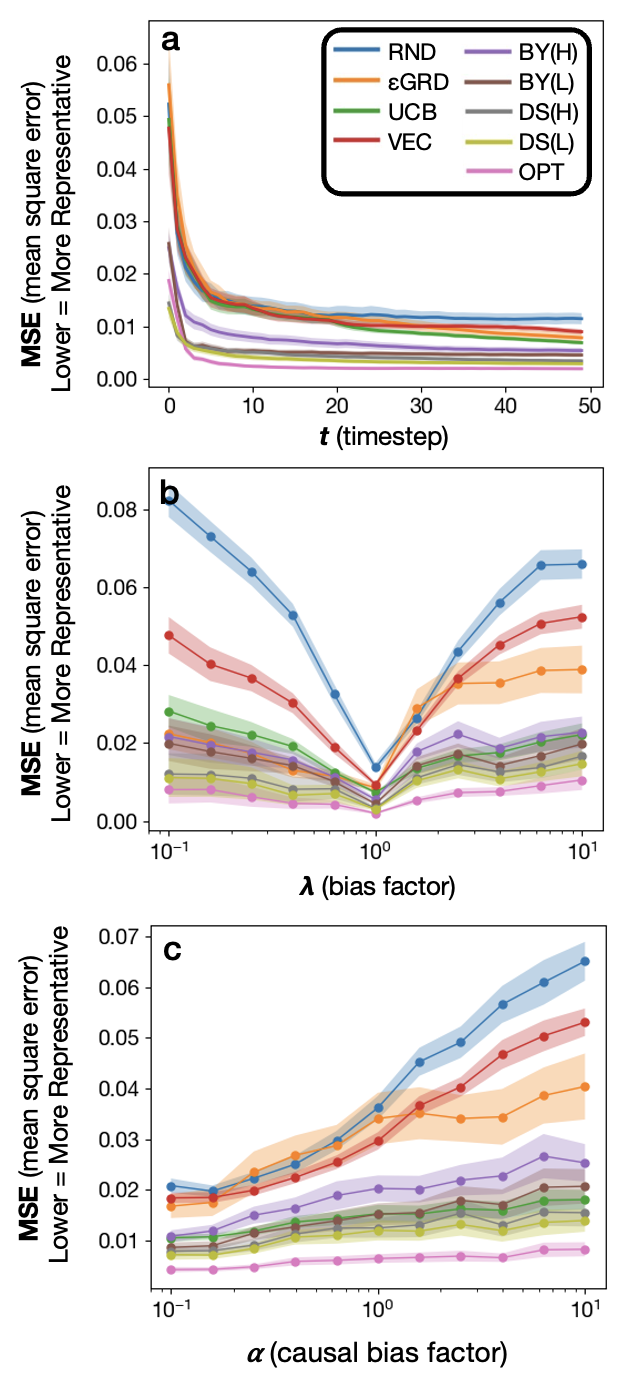}
    \caption{Dataset representativeness in the Intensive Care dataset measured by distance between cohort sensitive feature means and target vector $\boldsymbol{v} = \langle .5, \cdots, .5 \rangle$ as the cohort is constructed the no-bias case \textbf{(a)}, for the final cohort in the non-causal response bias case \textbf{(b)}, and for the final cohort in the causal distribution shift case \textbf{(c)}.
    Our proposed algorithms \textit{BY(H), BY(L), DS(H),} and \textit{DS(L)} outperform baseline sampling algorithms.
    Shaded regions indicate 95\% confidence intervals.}
    \label{fig:icu_sampling}
\end{figure}

Full results for all four datasets are available in appendix \ref{sup:sampling_other_datasets}.
In Figure \ref{fig:icu_sampling}a, we show the representativeness of the dataset constructed over time by each approach in a no-bias situation. 
While performance is similar between all algorithms, D-PBRS yields the most representative samples, often approaching fully informed OPT.
Response bias ($\lambda$) induces increased response rates of majority groups, i.e., individuals from majority groups are $\lambda$-times more likely to appear in a sample from biased sites compared to the group's true distribution at that site. 
Significant response bias in either direction harms the representativeness of the final cohort (\ref{fig:icu_sampling}b).
Yet, D-PBRS, and to a lesser extent PBRS, consistently yields more representative datasets than other sampling algorithms.
Figure \ref{fig:icu_sampling}c depicts dataset representativeness as a function of the casual bias ($\alpha$); as $\alpha$ increases, sampling a site increases the probability the member of majority groups will appear in future samples from that site. 
Similar to response bias, representativeness decreases as the bias becomes more pronounced.  
Unlike response bias, causal bias results in distribution shifts over time, increasing the difficult of accurately assessing which arm is best to sample.
Due to this shift, the advantage of PBRS and D-PBRS over other algorithms diminishes but is still present.

\subsection{Arm Sampling and Downstream Fairness}

\begin{figure*}[tbh!]
    \centering
    \includegraphics[width=1\linewidth]{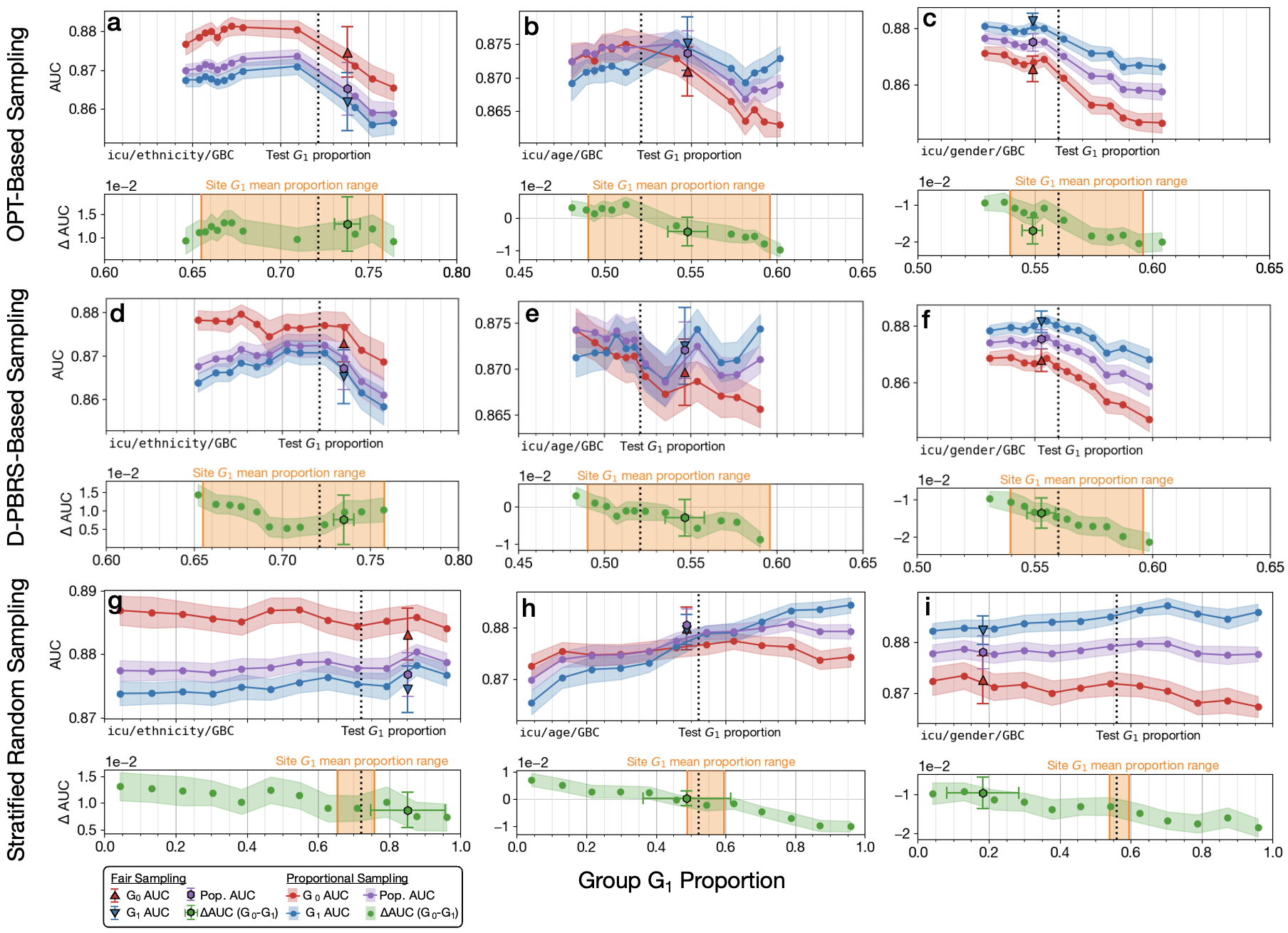}
    \caption{Population (purple) and subgroup (red and blue) AUCs for gradient-boosted classifiers in the Intensive Care dataset.
    Each column represents an analysis studying group proportions by one sensitive feature: \textbf{(a), (d), (g)} for ethnicity; \textbf{(b), (e), (h)} for age; and \textbf{(c), (f), (i)} for gender.
    Green points indicate the difference in subgroup AUCs (AUC$_{G_0} - $AUC$_{G_1}$). 
    Circles and shaded regions indicate quantile means and 95\% CIs for performance of representativeness-based samplers with varying $G_1$ proportions, while outlined triangles and hexagons with error bars indicate means and 95\% CIs for fairness-based samplers.
    The orange shading indicates the range of group $G_1$ proportions at each site.
    Subfigures \textbf{(a-c)} show classifier performance when training datasets are constructed by sampling arms with \textit{OPT}, subfigures \textbf{(d-f)} for sampling arms with \textit{D-PBRS}, and subfigures \textbf{(g-i)} for sampling directly from all training data to achieve a desired group proportion mix (stratified random sampling).}
    \label{fig:icu_univariate}
\end{figure*}

\begin{figure*}[tbh!]
    \centering
    \includegraphics[width=0.9\linewidth]{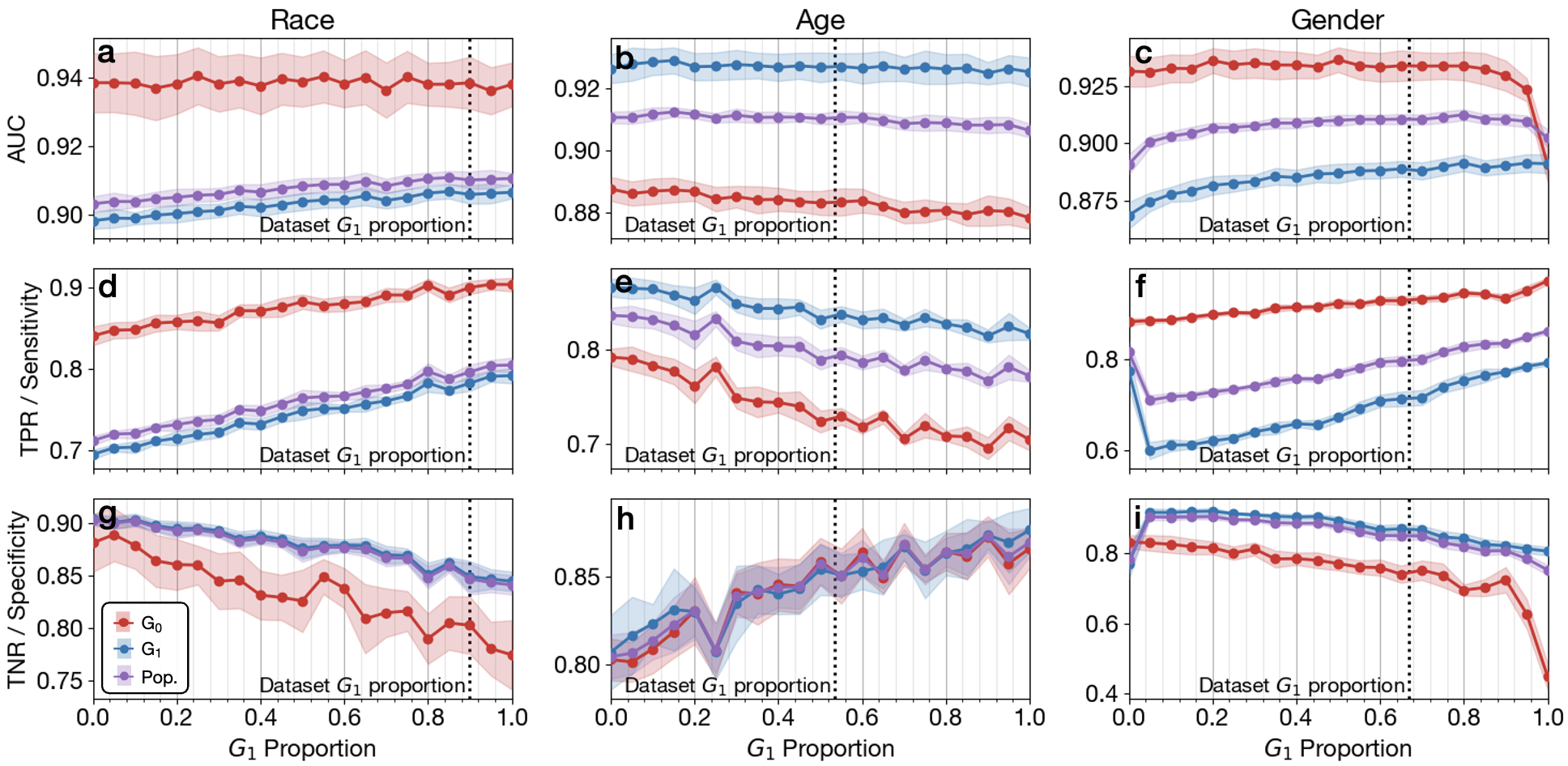}
    \caption{There is significant unfairness by race \textbf{(a, d, g)}, age \textbf{(b, e, h)}, and gender \textbf{(c, f, i)} in the Adult Income dataset.
    Population (purple) and subgroup (red and blue) AUCs \textbf{(a-c)}, TPRs \textbf{(d-f)}, TNRs \textbf{(g-i)} and 95\% CIs are plotted for varying $G_1$ proportions.}
    \label{fig:adl_fairness}
\end{figure*}

Over- and underrepresentation of particular groups in training data is a well known cause of unfairness within models trained on that data.
In figure \ref{fig:icu_univariate} we present population and group-specific test set AUC as a function of the $G_0/G_1$ split for each sensitive feature in the training dataset for our Intensive Care example. 
In figures \ref{fig:icu_univariate}a-c, the training dataset is constructed via arm based sampling using OPT to achieve the desired group proportions; in figures \ref{fig:icu_univariate}d-f, the training dataset is sampled from all available training data using stratified random sampling (SRS) to achieve the desired group proportions.
The outlined points throughout figures \ref{fig:icu_univariate}a-c and \ref{fig:icu_univariate}d-f indicate results for fair arm-based sampling and fair direct sampling, respectively.
Both variations of fair sampling achieve the desired goal of selecting a mix of groups $G_0$ and $G_1$ that minimizes performance difference between the two groups.
In the SRS case, this confirms the expected result that improving a group's proportion in training data will improve, or at least not hinder, that group's performance.
However, a group's performance improvement from increased representation can be quite limited at times.
Figure \ref{fig:icu_univariate}e shows how AUC increases for group $G_1$ as group $G_1$ proportion in the training data increases, but there is no significant concomitant decrease in group $G_0$ AUC.
Moreover, age (Fig. \ref{fig:icu_univariate}e) is the only sensitive feature for which there is some group $G_1$ proportion that equalizes AUC for groups $G_0$ and $G_1$ in the SRS analysis.
The SRS analyses of ethnicity and gender show consistently better classifier performance of groups $G_0$ and $G_1$, respectively, regardless of the training set proportions of these groups.
Thus, there must be additional factors affecting algorithmic fairness beyond group representation.
Given the theoretic results from the univariate case study in section 4, this is not unexpected if the noise values of the two groups are drastically different.

Another key result from this analysis is that the way datasets are constructed impacts the relationship between representation and algorithmic fairness.
The SRS results show the expected behavior: as $G_1$ proportion increases, $G_1$ test set AUC improves and $G_0$ test set AUC deteriorates,though the effects may not always be statistically significant.
On the other hand, arm-based sampling breaks this trend: when looking at both ethnicity and gender as sensitive features, increasing the $G_1$ training set proportion beyond its test set proportion causes deterioration of classifier performance for all groups.
Thus, attempting to achieve a desired group representation through adaptive sampling across multiple sites may yield unexpected downstream results.
We also note little difference between sampling with \textit{OPT} and \textit{D-PBRS} (Fig. \ref{fig:icu_univariate}), which indicates that the site-based framework, and not the representative sampling strategy, causes the discrepancy between SRS and arm-based methods.

\subsection{Fairness and Model Complexity}

\begin{figure}[htb!]
    \centering
    \includegraphics[width=0.6\linewidth]{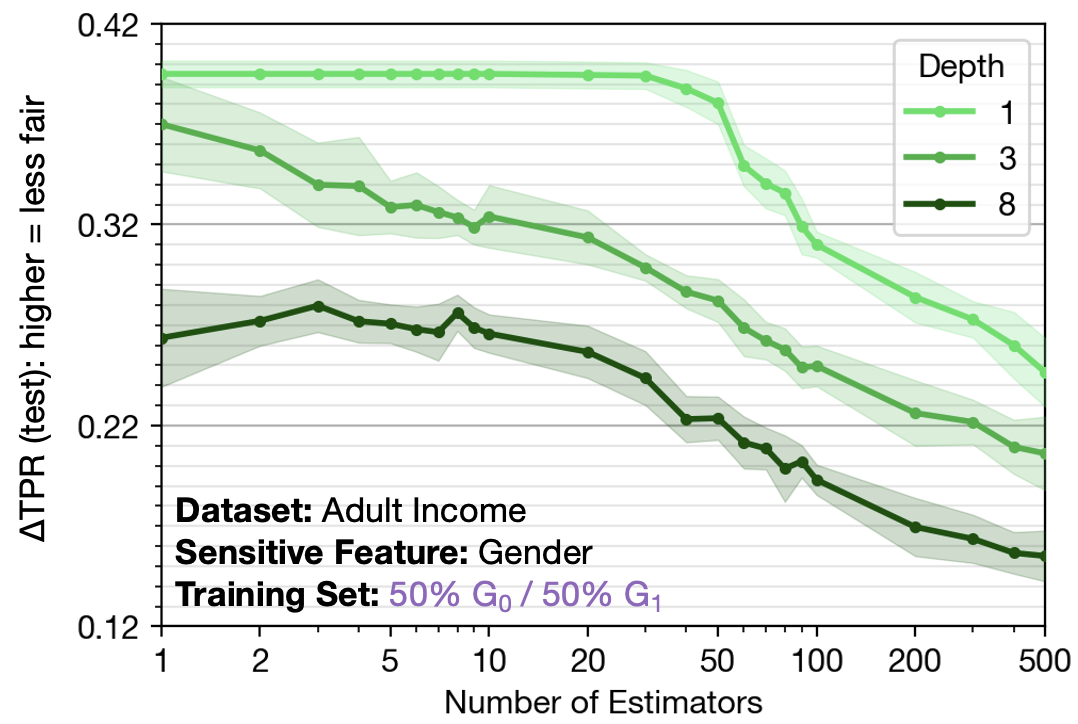}
    \caption{Increasing model complexity improves fairness by TPR parity for gender in the Adult Income dataset.
    Darker green lines indicate higher maximum tree depth for the GBC (higher complexity) and the x-axis shows number of estimation steps, with more indicating higher complexity.
    Shaded regions indicate 95\% CIs.}
    \label{fig:adl_gender_lineplot}
\end{figure}

\begin{figure*}[tbh!]
    \centering
    \includegraphics[width=1\linewidth]{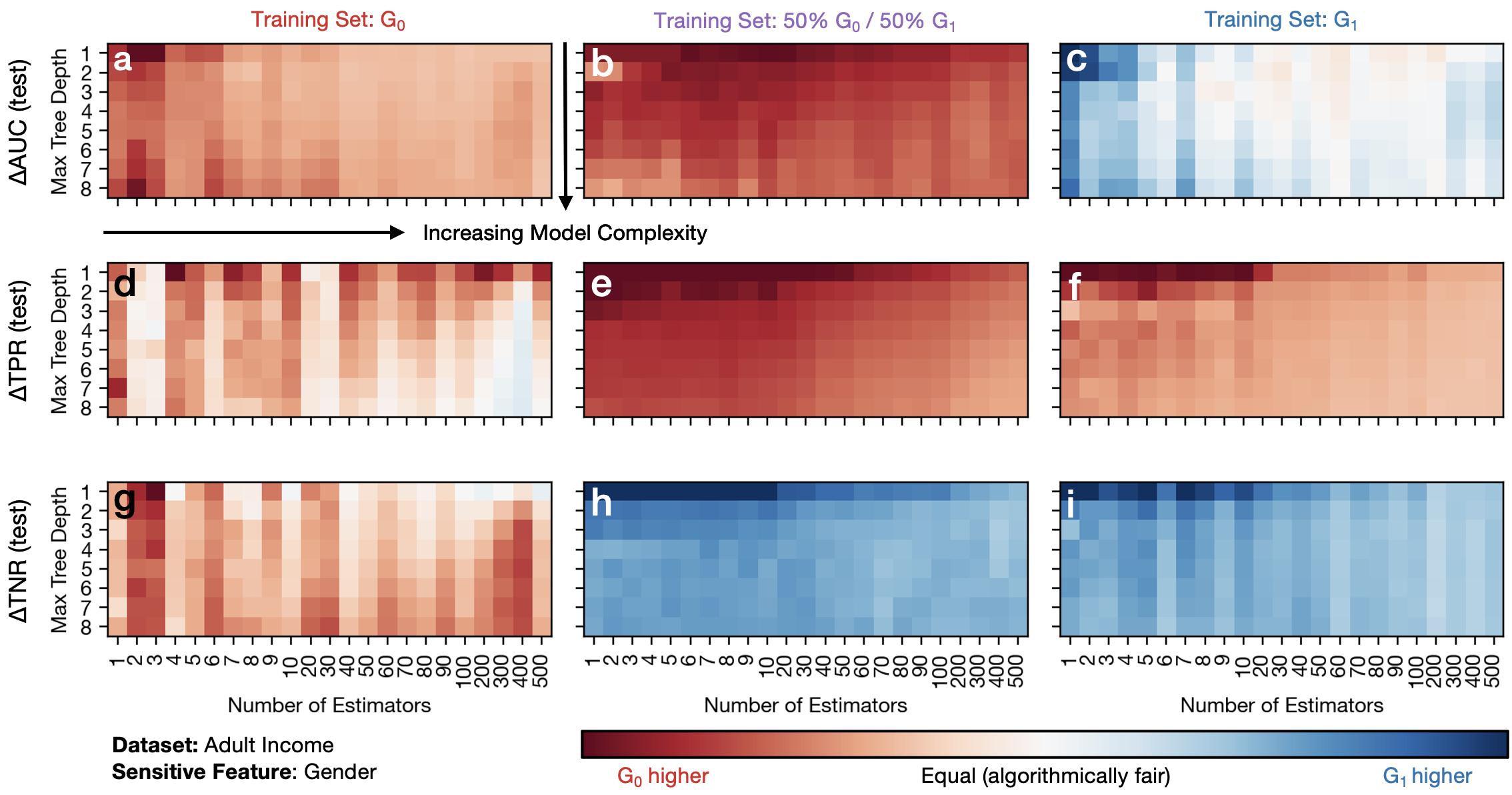}
    \caption{Increasing model complexity improves AUC \textbf{(a-c)}, TPR \textbf{(d-f)}, and TNR \textbf{(g-i)} parity for gradient boosted classifiers.
    Results are for the Adult Income dataset treating gender as the sensitive feature of interest.
    Darker red and blue colors indicate disparate performance favoring group $G_0$ and $G_1$, respectively, while paler colors indicate measure parity (fairness).
    Within each subfigure, rows represent maximum individual tree depths and columns indicate numbers of estimation steps.}
    \label{fig:adl_gender_complexity}
\end{figure*}

The Adult Income dataset shows significant AUC and TPR unfairness across all three tested sensitive features of race, age, and gender (Fig. \ref{fig:adl_fairness}).
Notably, modifying the training set proportions of groups $G_0$ and $G_1$ has limited effect on subgroup performance, except at the extremes (i.e., group proportions of 0 or 1).
Thus, this dataset highlights the practical case where modulating representation will not adequately address fairness concerns.
We shift our attention to increasing model complexity to better capture difficult-to-learn relationships between the features and labels.
We show how increasing complexity through greater tree depth and more estimation steps can reduce TPR unfairness between gender groups (Fig. \ref{fig:adl_gender_lineplot}).
A more complete complexity analysis shows similar results for AUC and TNR (Fig. \ref{fig:adl_gender_complexity}), and other sensitive features within the Adult Income dataset show similar patterns.
As tree depth and estimation steps increase, disparities in AUC, TPR, and TNR generally decrease, regardless of group representation in the training data.
Moreover, this decrease in unfairness through increased model complexity does not come at the expense of overall model performance.
In fact, classifier accuracy tends to \textit{improve} with increasing complexity (appendix \ref{sup:perf_complexity_analysis}, Fig. \ref{fig:adl_gender_auc}).
While the highest complexities --- estimators $\geq 200$ and depth $\geq 5$ --- show a moderate decrease in AUC, this is beyond the regions where we see the most substantial improvements in AUC unfairness.
We attribute these simultaneous improvements in both classifier accuracy and fairness with increased model complexity to the model being able to capture more complex data relationships.

\section{Discussion}
Representative datasets yield several benefits such as legitimacy, validity, equity, and generalizability.
In machine learning, generalizability is closely related to algorithmic fairness, a measure of prediction or performance parity between different groups.
In this paper, we analyze the relationship between representation and downstream algorithmic fairness in classification tasks across several datasets.
Contrary to our expectations, we find that more representative datasets rarely yield fairer classifiers.
Likewise, we find that datasets constructed to promote algorithmic fairness rarely are representative of the overall population.
We theorize that this tension between representativeness and fairness exists when groups differ significantly in their difficulty to learn.
If a large difficulty gap exists between groups, adding data points from the more difficult group may not be sufficient to overcome the disparity in classifier performance.
We show how an alternate approach, increasing model complexity, can help close this performance gap.
Thus, both representation and fairness may be simultaneously achieved.

In this paper, we also expand upon existing techniques for building a representative dataset from multiple data sources (e.g., multi-site clinical trial recruitment) through a Bayesian multi-armed bandit framework.
Our methods succeed at generating representative cohorts across a variety of biases and distributional shifts.
However, we find that downstream classifier performance differs significantly when cohorts are selected in a multi-site procedure to achieve a certain subgroup proportion compared to stratified random sampling of all records to achieve the same proportion.
The distribution of features, sensitive features, and labels over sites influences classifier fairness.
Thus, it is important to consider \textit{how} a dataset is constructed beyond its demographics matching a target distribution.

Despite the contributions of this work, there are some key limitations to note.
Representative sampling, as we have formulated it, focuses on matching a dataset's attribute means to a target population; however, the underlying distributions of the dataset and target population may differ substantially.
When it is important to match the shape of the dataset and target distributions, alternative measures for representation may perform better.
Moreover, it is important to consider what it means to match attribute means of a dataset to a ground truth population.
Such matching may be intuitive for physical or biological variables like age but becomes much more complicated for social variables like race, where the notion of ground truth does not necessarily apply.
Finally, it is important to note that increasing model complexity will not always substantially improve algorithmic fairness.
In fact, \cite{chen_why_2018} show that if a Bayesian optimal classifier is algorithmically unfair, further fairness cannot be enforced without loss of performance.
While we show that including additional data points from the disadvantaged group may not improve fairness, we echo their suggestion to collect additional features, if possible, in this situation.
Future work may include broader definitions of representation that are not group-centric, as well as expanding these results to additional definitions of fairness like procedural fairness as opposed to classifier parity measures.

We conclude that the relationship between dataset representativeness and downstream fairness is complicated and influenced by numerous factors.
While increasing a group's representation in a dataset sometimes improves that group's performance substantially, the practical constraints of dataset generation may sometimes cause the opposite effect.
Sometimes, changing a group's representation in a dataset has little impact on classifier performance; as shown, this may be due to learnability differences between groups.
In these cases, we suggest that one way of addressing this particular unfairness is to increase model complexity to more adequately capture complex data relationships.

\bibliography{bibliography}

\appendix

\section{Full Sampling Algorithms} \label{sup:algs}
\begin{algorithm}[tbh!]
    \small
    \SetAlgoLined
    \caption{Prior-based Bayesian Representative Sampling (PBRS) and Distributed PBRS (D-PBRS): sampling procedures for building a representative dataset.}
    \label{alg:sup_PBRS}
    \KwData{sites $\pazocal{S}$, budget $T$, target vector $\boldsymbol{v}$, prior mean and covariance 
    $\boldsymbol\theta_j, \boldsymbol \Sigma_j \forall j\in [m]$.}
    
    \KwResult{dataset $(\boldsymbol{X}, \boldsymbol{A}, \boldsymbol{Y})$.}
    
    $n_j \gets 0 \quad \forall j \in [m]$; \hfill // Number of times site $j$ is sampled
    
    $\boldsymbol{\Psi_j} = (n_j+1)\hat{\boldsymbol\Sigma_j}$; \hfill // Inverse scale matrix of normal inverse Wishart
    
    $\pazocal{W}_j^{-1}(\boldsymbol\theta_j, \boldsymbol{\Psi_j}, n_j)$; \hfill // Initialize normal inverse Wishart distribution
    
    $(\boldsymbol{X}, \boldsymbol{A}, \boldsymbol{Y}) = \emptyset$; \hfill // Initialize dataset
    
    \For{$t=0\ldots T$}{
        
        $\hat{\boldsymbol{\theta}_j}, \hat{\boldsymbol{\Sigma}_j} \sim \pazocal{W}_j^{-1}(\boldsymbol{\theta}_j, \boldsymbol{\Psi}_j, n_j),~ \forall j$;
        
        $\hat{\boldsymbol{a}_j} \sim \pazocal{N}(\hat{\boldsymbol{\theta}_j}, \hat{\boldsymbol{\Sigma}_j}),~\forall j$;
        
        $(\boldsymbol{X}, \boldsymbol{A}, \boldsymbol{Y}).\textrm{add}\big(\textrm{allocateAndSample}(\boldsymbol{A}, \boldsymbol{v}, \boldsymbol{a}, T, n)\big)$;\\

        $\textrm{updatePriors}\big( \boldsymbol{A}^{(t)}, \boldsymbol{\theta}_{j^*}, \boldsymbol{\Psi}_{j^*}, n_{j^*}, \beta, t\big) $;
        }

    $\textbf{return}\; \bigcup _{t=1} ^{T}(\boldsymbol{X}, \boldsymbol{A}, \boldsymbol{Y})^{(t)}$; \hfill // Final dataset \\
    \vspace{1em}
    \textbf{Function} allocateAndSample($\boldsymbol{A}$, $\boldsymbol{v}$, $\boldsymbol{a}$, $T$, $n$):
    
    \Indp
    
    // PBRS
    
    $j^* \gets \arg\min_{j}\mathbb{E}\bigg[ \pazocal{M}\bigg(\boldsymbol{v},  \big(\textrm{sum}(\boldsymbol{A})  + \boldsymbol{a}_j\big)/T\bigg) \bigg]$;\\
    // Arm with best improvement \\

    $(\boldsymbol{X}, \boldsymbol{A}, \boldsymbol{Y})^{(t)}\sim D_{j^*}$; \hfill // Sample data from arm $j^*$\\
    $n_{j^*} += 1$;\\
    $\textbf{return}\; (\boldsymbol{X}, \boldsymbol{A}, \boldsymbol{Y})^{(t)}$;
    \\\vspace{1em}
    \Indm

\textbf{Function} allocateAndSample($\boldsymbol{A}$, $\boldsymbol{v}$, $\boldsymbol{a}$, $T$, $n$):

    \Indp
    
    // Distributed PBRS
    
    $\rho_j \gets 0 \quad \forall j \in [m]$ \hfill // Resource vector
    
    $\rho^* \gets \arg\min_{\rho}\mathbb{E}\bigg[ \pazocal{M}\bigg(\boldsymbol{v},  ~\big(\textrm{sum}(\boldsymbol{A})  + \rho\boldsymbol{a}\big)/T\bigg) \bigg]$ \\ 
        \Indp subject to $\Sigma{\rho} = 1$

    \Indm
    
    $(\boldsymbol{X}, \boldsymbol{A}, \boldsymbol{Y})^{(t)} = \emptyset$
    
    \For{$j=0\ldots m$}{
            $(\boldsymbol{X}, \boldsymbol{A}, \boldsymbol{Y})^{(t, j)}\sim \lfloor\rho_j^*\rfloor D_{j}$; \hfill /* Sample from arm j a fraction of examples determined by $\rho^*$ for $j$ */ \\
            $(\boldsymbol{X}, \boldsymbol{A}, \boldsymbol{Y})^{(t)}.\textrm{add}\big((\boldsymbol{X}, \boldsymbol{A}, \boldsymbol{Y})^{(t, j)}\big)$;\\
            $n_j\; += \rho_j^*$;\\
        }
    $\textbf{return}\; (\boldsymbol{X}, \boldsymbol{A}, \boldsymbol{Y})^{(t)}$;

\end{algorithm}

\begin{algorithm}[tbh!]
\small
    \caption{Fair Arm-Based Sampling}
    \label{alg:sup_fair}
    \KwData{classifier $\pazocal{F}: \boldsymbol{X} \rightarrow \boldsymbol{Y}$, loss function $\mathcal{L}(\pazocal{F}(\boldsymbol{X}), \boldsymbol{Y} )$, validation data $(\boldsymbol{X}, \boldsymbol{A}, \boldsymbol{Y})'$}
    \textbf{Function} allocateAndSample($\bigcup_{t'=1}^{t-1}(\boldsymbol{X}, \boldsymbol{A},\boldsymbol{Y})^{(t')}$, $T$, $n$):
    
    \Indp
    train $\pazocal{F}$ using $\bigcup_{t'=1}^{t-1}(\boldsymbol{X}, \boldsymbol{A},\boldsymbol{Y})^{(t')}$ ;\\
    
    $g^* \gets \arg\max_{g} \mathbb{E} \bigg[ \mathcal{L}\big(\pazocal{F}\big( \boldsymbol{X}' \big), \boldsymbol{Y}' \big) \bigg| \boldsymbol{A}' = g \bigg]$ \\
    // Group with highest loss \\
    
    $j^* \gets \arg\max_{j} \sum_{t'=1}^T \textrm{count}(\boldsymbol{A}\; | \;\boldsymbol{A} = g^*, s = s_j)^{t'} $; \\
    // Site with the highest proportion of group $g^*$\\

    $(\boldsymbol{X}, \boldsymbol{A}, \boldsymbol{Y})^{(t)}\sim D_{j^*}$;

    $n_{j^*} += 1$;\\
    $\textbf{return}\; (\boldsymbol{X}, \boldsymbol{A}, \boldsymbol{Y})^{(t)}$;
\end{algorithm}

Algorithm \ref{alg:sup_PBRS} outlines the full procedures for PBRS and D-PBRS. 
Notably, the difference between these two algorithms is in the ``allocateAndSample'' function, which decides whether fractional allocation is allowed.

\section{Data Preprocessing and Computing Infrastructure}
For each dataset we follow a uniform procedure when preprocessing the raw data files. 
Ordinal features (e.g., an individual's income) are scaled between 0 and 1. 
Non-ordinal categorical features (e.g., an individual's occupation) are one-hot encoded. 
Binary features (e.g., ) are encoded as 0 and 1. 
All sensitive features are treated as binary or categorical. 
Only age and family income are non-categorical features in the raw datasets. In order to binarize these features we threshold on the mean age (family income) of the dataset and define categories of Young (Low Income) and Old (High Income). 
All analyses presented in this work were performed on an Apple M1 Max processor. Source code was written using Python 3.10.12.

\section{Proofs}\label{sup:proofs}

Provide full proofs, for the theoretical results presented in the main body.

\begin{proof}[Proof of Theorem \ref{thm:convex}]
    The objective in Equation \ref{eq:obj} is
    \[
        \min_{(\mathbf{X}, \mathbf{A}, \mathbf{Y})}\pazocal{M}\bigg(\mathbf{v},  ~\frac{1}{|\mathbf{A}|}\sum_{\mathbf{a}\in \mathbf{A}}\mathbf{a} \bigg)
    \]
    and the objective in Equation \ref{eq:obj2} is
    \[
         \min_{\mathbf{A}} ~\pazocal{M}\bigg(\mathbf{v},  ~\frac{1}{T}\sum_{t=1}^T\textrm{avg}\big(\mathbf{A}^{(t)}\big)\bigg)
    \]
    To first prove equivalence between these two objectives when each sample yields $k$ individuals, we restate the derivation provided in the main
    \[
        \frac{1}{|\mathbf{A}|}\sum_{\mathbf{a}\in \mathbf{A}}\mathbf{a} = \frac{1}{T k}\sum_{\mathbf{a}\in \mathbf{A}}\mathbf{a} = \frac{1}{T}\sum_{t=1}^T \sum_{\mathbf{a}\in \mathbf{A}^{(t)}}\frac{\mathbf{a}}{k} = \frac{1}{T}\sum_{t=1}^T \textrm{avg}\big(\mathbf{A}^{(t)}\big)
    \]
    as such, we see that for any $\mathbf{A} = \cup_{t=1}^T \mathbf{A}^{(t)}$, 
    \[
        \pazocal{M}\big(\mathbf{v}, \frac{1}{|\mathbf{A}|}\sum_{\mathbf{a}\in \mathbf{A}}\mathbf{a}\big) = \pazocal{M}(\mathbf{v}, \frac{1}{T}\sum_{t=1}^T \textrm{avg}\big(\mathbf{A}^{(t)} \big)\big),
    \]
    and the two objectives have equal optimums.

    To show the convexity of the data collector's objective w.r.t. the samples $\mathbf{A}^{(t)}$, we note that $\pazocal{M}(\mathbf{v}, \mathbf{u})$, is convex in $\mathbf{u}\in\mathbb{R}^d$, and thus for any linear function $f$, the composition  $\pazocal{M}\big(\mathbf{v}, f(\mathbf{u})\big)$ is also convex in $\mathbf{u}$.
   The function $\frac{1}{T}\sum_{t=1}^{T} \textrm{avg}\big(\mathbf{A}^{(t)}\big)$ is linear in the collection of samples $\mathbf{A}^{(T)} = \cup_{t=1}^{(T)}\mathbf{A}^{(t)}$.
    Thus, $\pazocal{M}$ is convex in the samples $\mathbf{A}^{(T)}$.
\end{proof}

\begin{proof}[Proof of Theorem \ref{thm:unfair_1}]
    Let $(x, y)$ be one datapoint, i.e., a feature and label respectively. Suppose that for a given $x$ the label $Y$ is induced via $y = \mathbb{I}\big[ x + \varepsilon_g \geq \theta_g \big]$ where $\varepsilon_g \sim \mathcal{N}(\mu_g, \sigma_g)$. 
    Let $(\mathbf{X}, \mathbf{Y})$ be a dataset of $n_0$ such examples from group $g_0$ and $n_1$ such examples from group $g_1$.
    Let $\pazocal{F}$ be the classifier with the highest accuracy on $(\mathbf{X}, \mathbf{Y})$.
    
    Then the expected unfairness of $\pazocal{F}$ with respect to each group's true distribution over features and labels $D_g$, can be written as 
    \[
        \delta = \textrm{error}(\pazocal{F}, D_0) - \textrm{error}(\pazocal{F}, D_0)
    \]
    In the case that $\pazocal{F}$ is a threshold classifier acting on both groups, the classifier with the highest accuracy on data $(\mathbf{X}, \mathbf{Y})$ will have the propriety that
    \[
        \pazocal{F}(x|g) =
            1 ~\textrm{ if } ~x \geq 1/n_g\sum_{x_j\in \mathbf{X} | g} x, ~\quad\textrm{ and }\\
            0 ~\textrm{ otherwise}
    \]
    where $1/n_g\sum_{x_j\in \mathbf{X} | g} x$ is the mean value of all features in $\mathbf{X}$ which correspond to group $g$.
    Thus, each error term $\textrm{error}(\pazocal{F}|g)$ is proportional to the empirical mean $1/n_g\sum_{x_j\in \mathbf{X} | g} x$ and the true feature mean $\theta_g$. 
    By the Mean Absolute Difference for normal distributions, this value is 
    \[
        \sigma_g\sqrt{2/\pi n_g}
    \]
    for each group.
    Thus the expected difference in error rates is
    $\mathbb{E}\big[ \delta \big] = \sqrt{2/\pi}\big(\sigma_0\sqrt{1/n_0} - \sigma_1\sqrt{1/n_1}\big)$

\end{proof}

\begin{proof}[Proof of Theorem \ref{thm:unfair_2}]
    By Theorem \ref{thm:unfair_1}, having an unfairness of size at most $\delta$ requires 
    \[
    -\delta \leq\sqrt{2/\pi}\big(\sigma_0\sqrt{1/n_0} - \sigma_0\sqrt{1/n_0}\big) \leq \delta
    \]
    By first examining the left-side inequality with respect to group $g_1$ we get
    \[
    \sigma_1 \sqrt{1/n_1} \leq \delta \sqrt{\pi/2} + \sigma_0 \sqrt{1/n_0}
    \]
    \[
        \Rightarrow 1/n_1 \leq \bigg( \frac{\delta \sqrt{\pi/2} + \sigma_0 \sqrt{1/n_0}}{\sigma_1}\bigg)^2
    \]
    \[
        \Rightarrow n1 \geq \bigg( \frac{\sigma_1}{\delta\sqrt{\pi/2n_0 + \sigma_0}} \bigg)^2n_0
    \]
    A similar algebraic reduction when examining the right-side inequality with respect to group $g_0$ yields the other inequality.
\end{proof}

\section{Experimental Results}\label{sup:experimental}
\subsection{Sampling in Other Datasets}\label{sup:sampling_other_datasets}

\begin{figure}[htb!]
    \centering
    \includegraphics[width=0.4\linewidth]{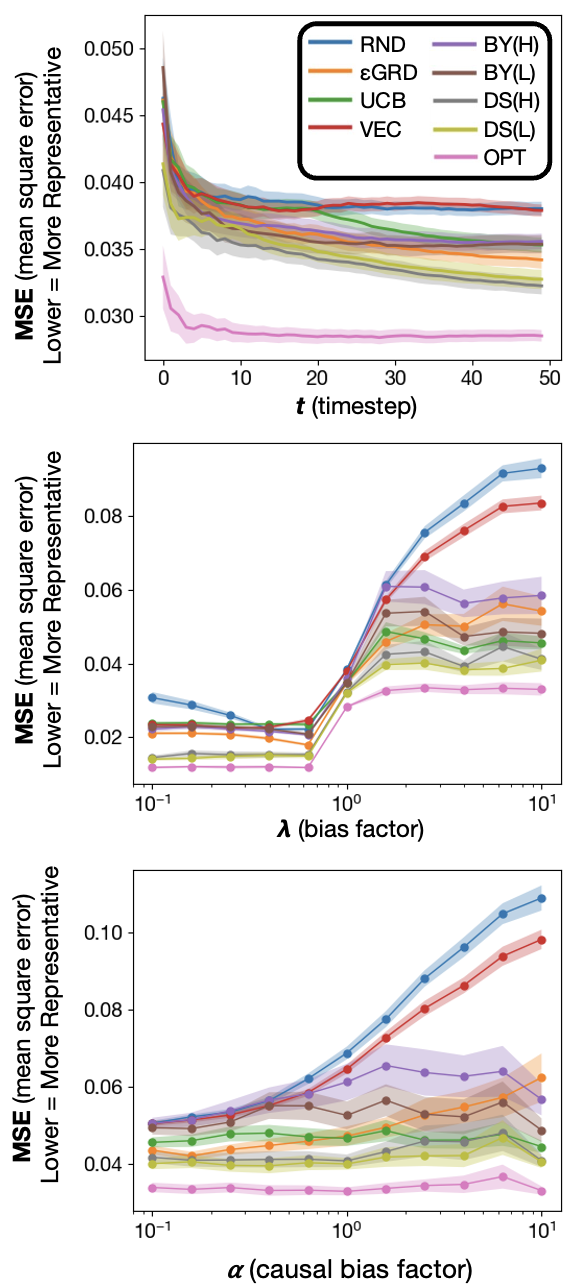}
    \caption{Dataset representativeness for the no-bias, non-causal response bias, and causal response bias cases in the Law School dataset.}
    \label{fig:sup_law_sampling}
\end{figure}

\begin{figure}[htb!]
    \centering
    \includegraphics[width=0.4\linewidth]{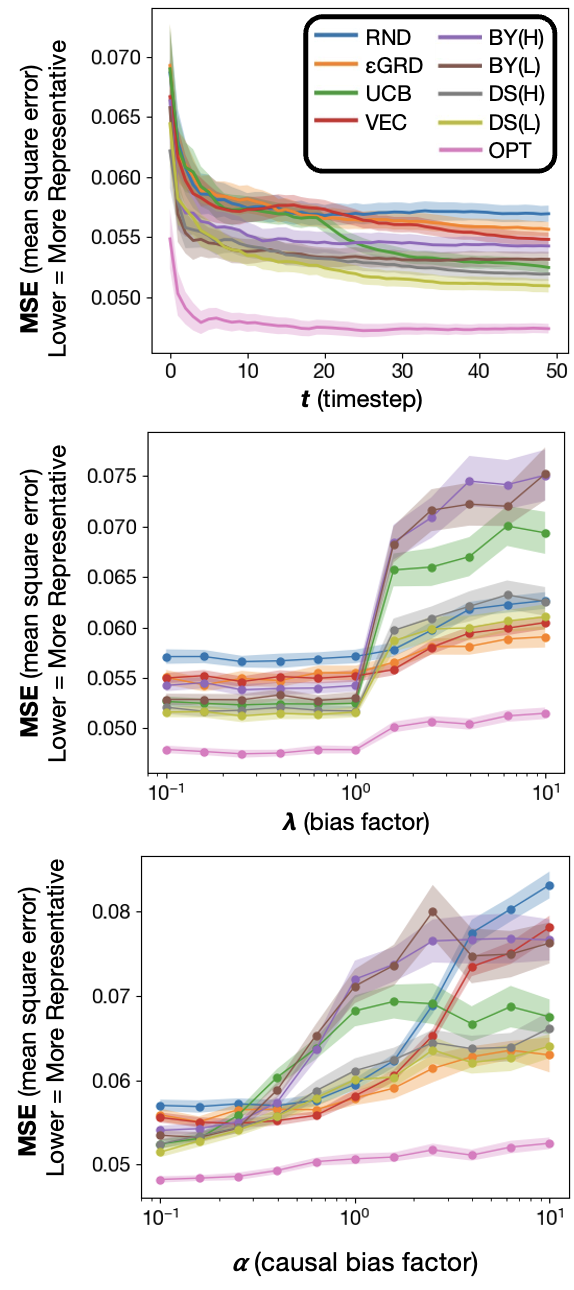}
    \caption{Dataset representativeness for the no-bias, non-causal response bias, and causal response bias cases in the Lending Club dataset.}
    \label{fig:sup_lnd_sampling}
\end{figure}

\begin{figure}[htb!]
    \centering
    \includegraphics[width=0.4\linewidth]{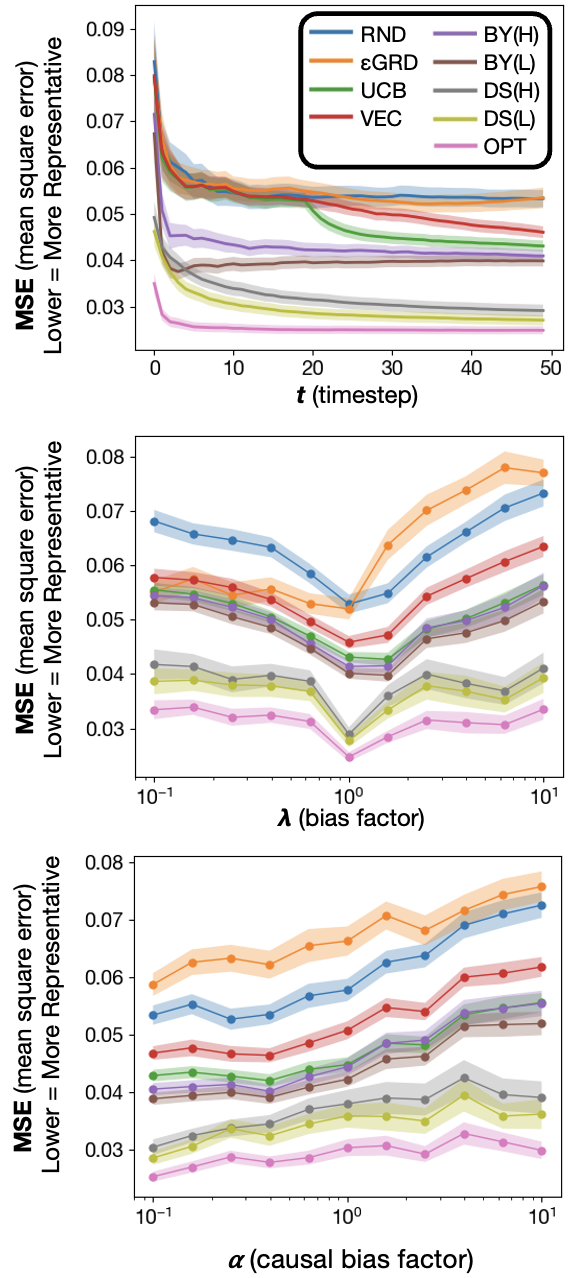}
    \caption{Dataset representativeness for the no-bias, non-causal response bias, and causal response bias cases in the Texas Salary dataset.}
    \label{fig:sup_txs_sampling}
\end{figure}

As an extension of main paper figure \ref{fig:icu_sampling}, we show performance of the nine algorithms on all four tested datasets in figures \ref{fig:sup_law_sampling}, \ref{fig:sup_lnd_sampling}, and \ref{fig:sup_txs_sampling}. In all cases, the fully-informed algorithm OPT achieves the best performance, typically followed by D-PBRS, then PBRS and UCB-LCB.

\subsubsection{Response Bias}
Recall that response bias is defined by parameters $\lambda$ the increased probability of majority group members responding in a sample, and $\gamma$ the number of sites which have $\lambda$-bias. 
We can convert $\lambda$ to a proportion scaling factor $b$ through the transformation $b = \frac{\lambda}{1 + \lambda}$.
To implement response bias for binary sensitive features $\pazocal{A}=\{0, 1\}^d$, we choose $1$ to represent the larger group. 
For example if there are two features, age (Old or Young) and gender (Male or Female), where 70\% of individuals are Old and 60\% are Female, then $\langle 1, 1 \rangle$ corresponds to an individual who is both Old and Female. 
For example if there are two features, age (Old or Young) and gender (Male or Female), where 70\% of individuals are Old and 60\% are Female, then $\langle 1, 1 \rangle$ corresponds to an individual who is both Old and Female. 
When sampling from site $j$, rather than selecting $k$ examples uniformly at random from the associated data partition, $k$ examples are selected randomly with weights proportional to $\sum_{\ell=1}^d\big(b\cdot a_{\ell} + (1-b)\cdot(1-a_{\ell})\big)^2$. 
Thus an individual with features in each majority group (i.e., $\boldsymbol{a}=\mathbf{1}$) has $d \times \lambda^2$ times more sample weight than an individual with features from each minority group (i.e., $\boldsymbol{a}=\mathbf{0}$). When $\lambda=1$, then $b = 0.5$ and this sum reduces to $0.5^2$ for \emph{all} individuals and the no-bias setting is recovered.

\subsubsection{Causal Distribution Shift Bias}
Recall that casual distribution shifts are defined by parameter $\alpha$ where the response probability $p$ of each individual at site $j$ is scaled by $p_\textrm{post} = p_\textrm{pre}^{1 + \alpha \times \rho_j}$ when sampled.
To implement this for binary groups, we again represent each majority group with value $1$ and minority groups with value $0$.
Similar to the case of response bias, we re-weight the sample probabilities of the data partition associated with each site.
The sample probabilities for each individual at site $j$, after $n_j$ sampling iterations, is proportional to $\prod^{\sum^{n_j}{\rho_j}} p^{1 + \alpha \times \rho_j}$, where $p$ is the initial response probability of the individual, determined as described in the response bias section above.
As $\alpha$ or $\lambda$ increase, members from minority groups are less likely to appear in repeated samples from the same site.

\subsection{Arm Sampling and Downstream Fairness in Other Datasets}
\begin{figure*}[tbh]
    \centering
    \includegraphics[width=1\linewidth]{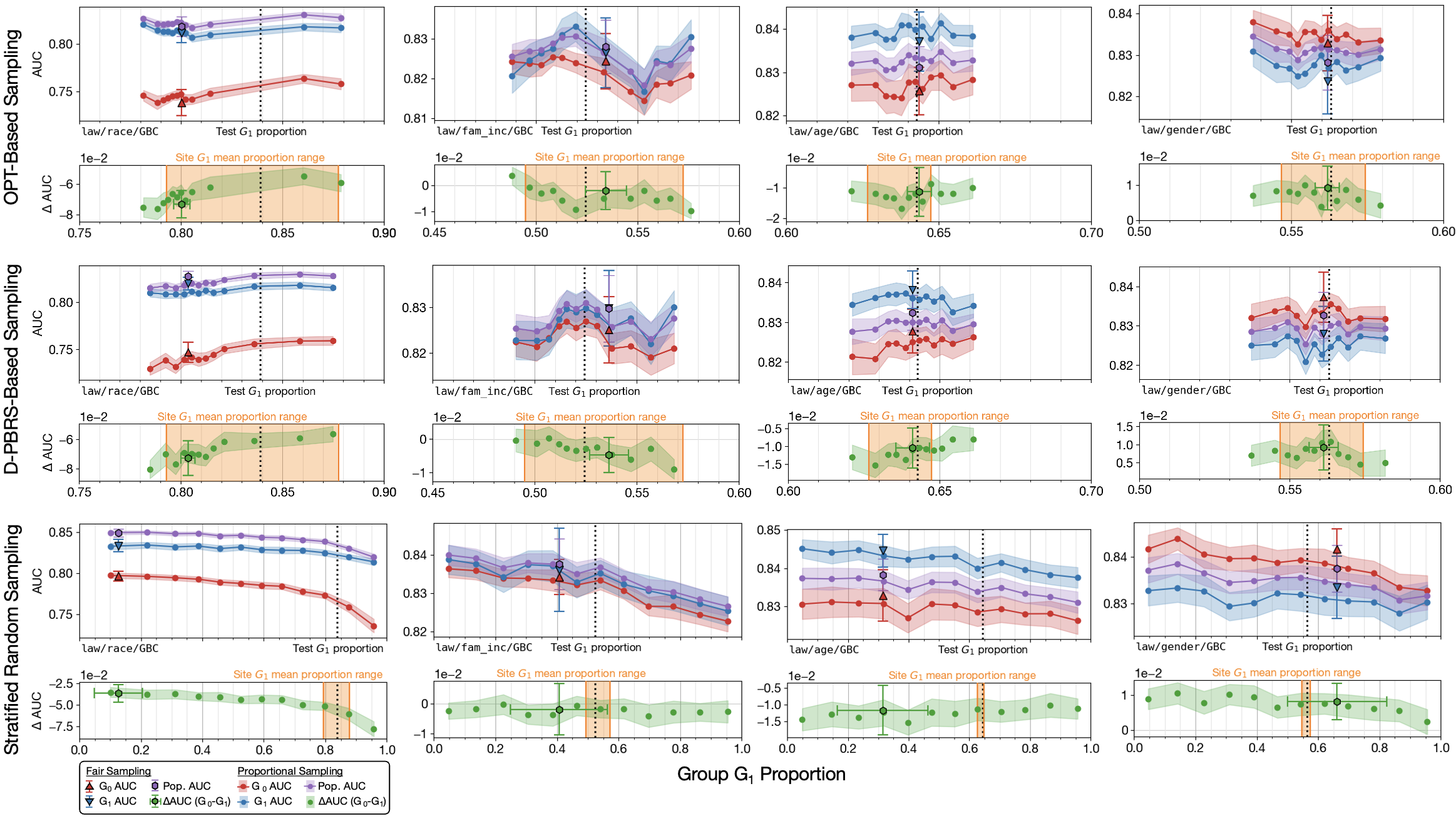}
    \caption{Population (purple) and subgroup (red and blue) AUCs for gradient-boosted classifiers in the Law School dataset.
    Each column represents an analysis studying group proportions by one sensitive feature.
    Green points indicate the difference in subgroup AUCs (AUC$_{G_0} - $AUC$_{G_1}$). 
    Circles and shaded regions indicate quantile means and 95\% CIs for performance of representativeness-based samplers with varying $G_1$ proportions, while outlined triangles and hexagons with error bars indicate means and 95\% CIs for fairness-based samplers.
    The orange shading indicates the range of group $G_1$ proportions at each site.
    The top row shows classifier performance when training datasets are constructed by sampling arms with \textit{OPT}, the middle row for sampling arms with \textit{D-PBRS}, and the bottom row for SRS.}
    \label{fig:law_arm_fairness_GBC}
\end{figure*}

\begin{figure*}[tbh]
    \centering
    \includegraphics[width=1\linewidth]{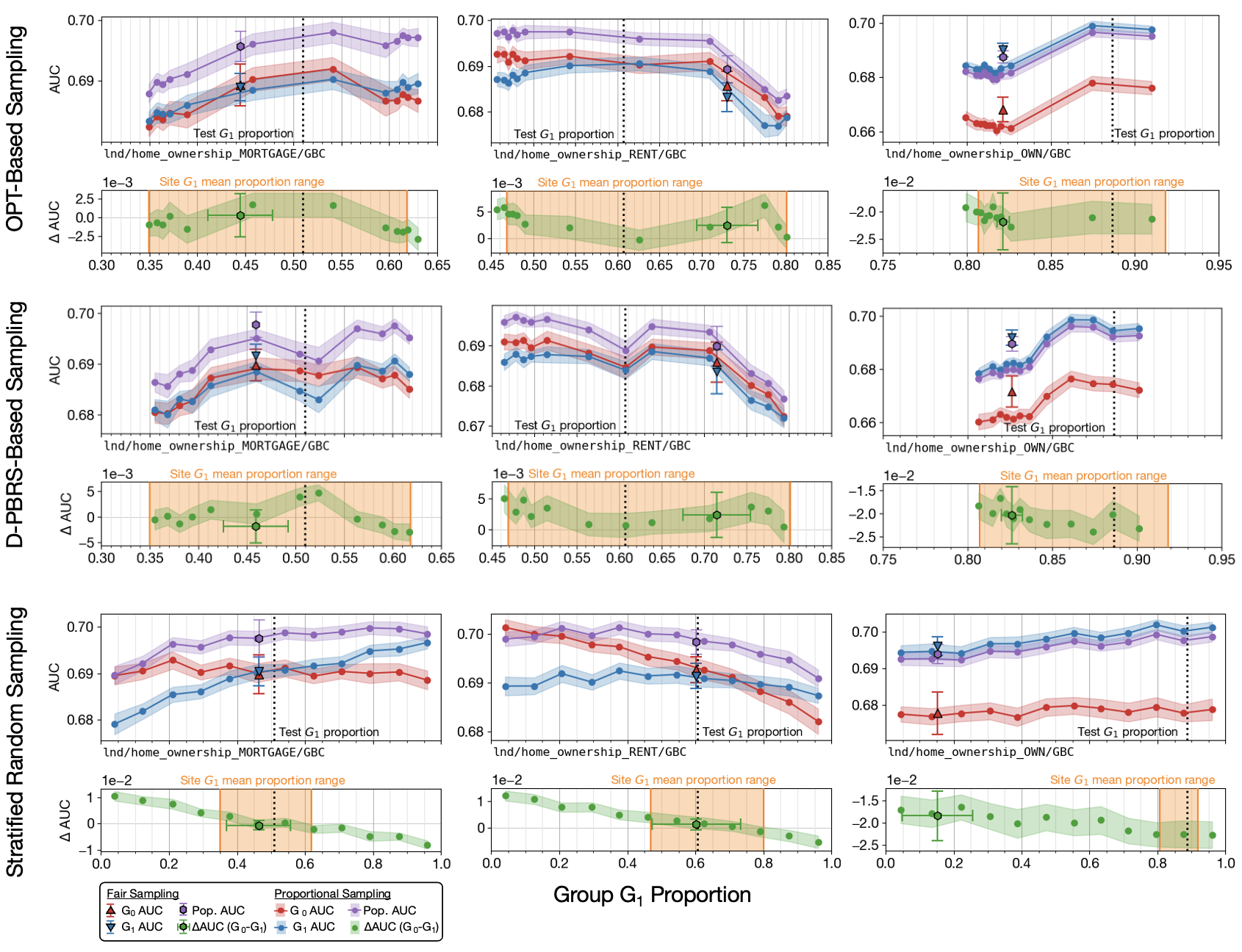}
    \caption{Population (purple) and subgroup (red and blue) AUCs for gradient-boosted classifiers in the Law School dataset.
    Each column represents an analysis studying group proportions by one sensitive feature.
    Green points indicate the difference in subgroup AUCs (AUC$_{G_0} - $AUC$_{G_1}$). 
    Circles and shaded regions indicate quantile means and 95\% CIs for performance of representativeness-based samplers with varying $G_1$ proportions, while outlined triangles and hexagons with error bars indicate means and 95\% CIs for fairness-based samplers.
    The orange shading indicates the range of group $G_1$ proportions at each site.
    The top row shows classifier performance when training datasets are constructed by sampling arms with \textit{OPT}, the middle row for sampling arms with \textit{D-PBRS}, and the bottom row for SRS.}
    \label{fig:lnd_arm_fairness_GBC}
\end{figure*}

\begin{figure*}[tbh]
    \centering
    \includegraphics[width=1\linewidth]{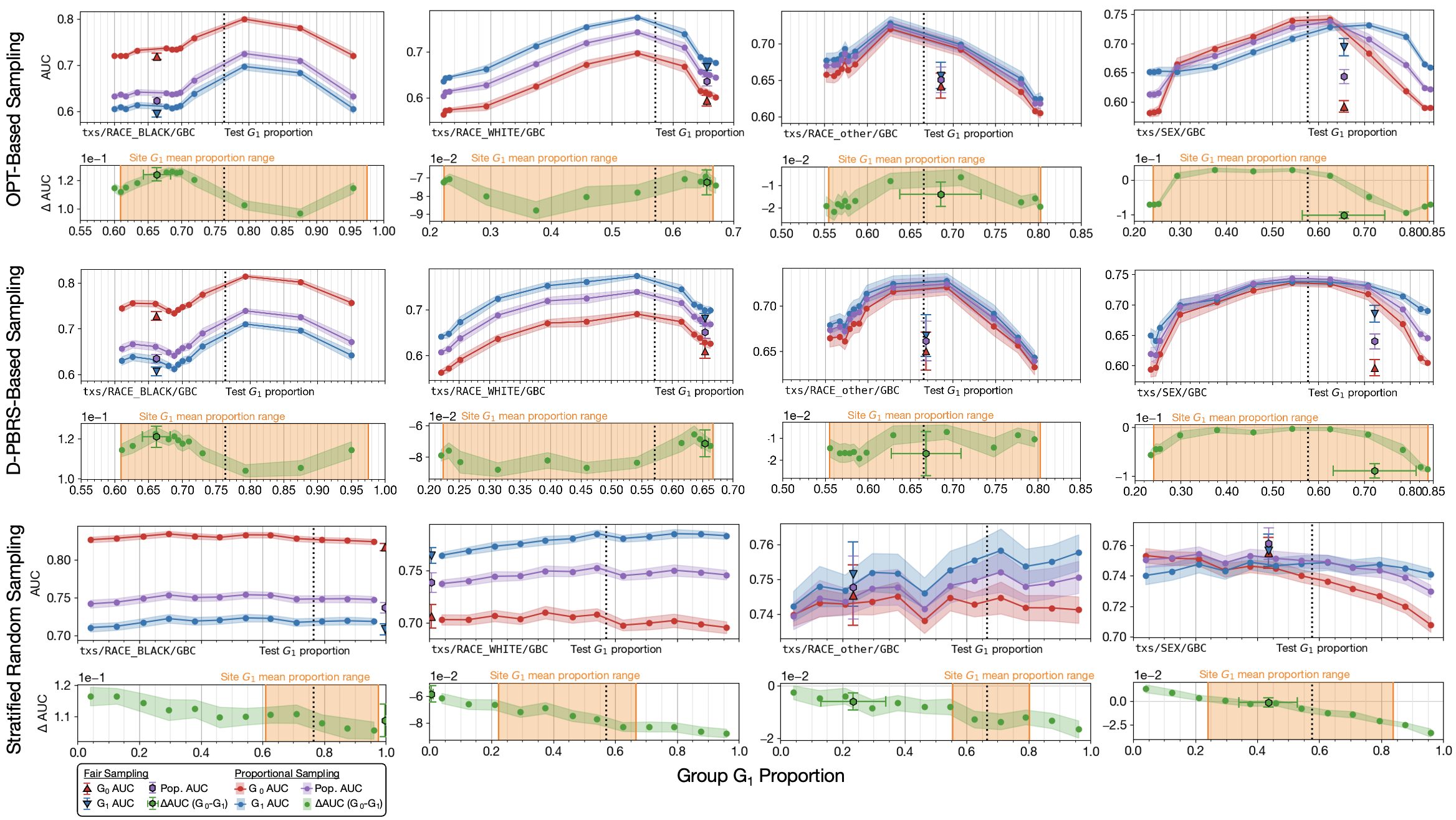}
    \caption{Population (purple) and subgroup (red and blue) AUCs for gradient-boosted classifiers in the Texas Salary dataset.
    Each column represents an analysis studying group proportions by one sensitive feature.
    Green points indicate the difference in subgroup AUCs (AUC$_{G_0} - $AUC$_{G_1}$). 
    Circles and shaded regions indicate quantile means and 95\% CIs for performance of representativeness-based samplers with varying $G_1$ proportions, while outlined triangles and hexagons with error bars indicate means and 95\% CIs for fairness-based samplers.
    The orange shading indicates the range of group $G_1$ proportions at each site.
    The top row shows classifier performance when training datasets are constructed by sampling arms with \textit{OPT}, the middle row for sampling arms with \textit{D-PBRS}, and the bottom row for SRS.}
    \label{fig:txs_arm_fairness_GBC}
\end{figure*}

\begin{figure*}[tbh]
    \centering
    \includegraphics[width=1\linewidth]{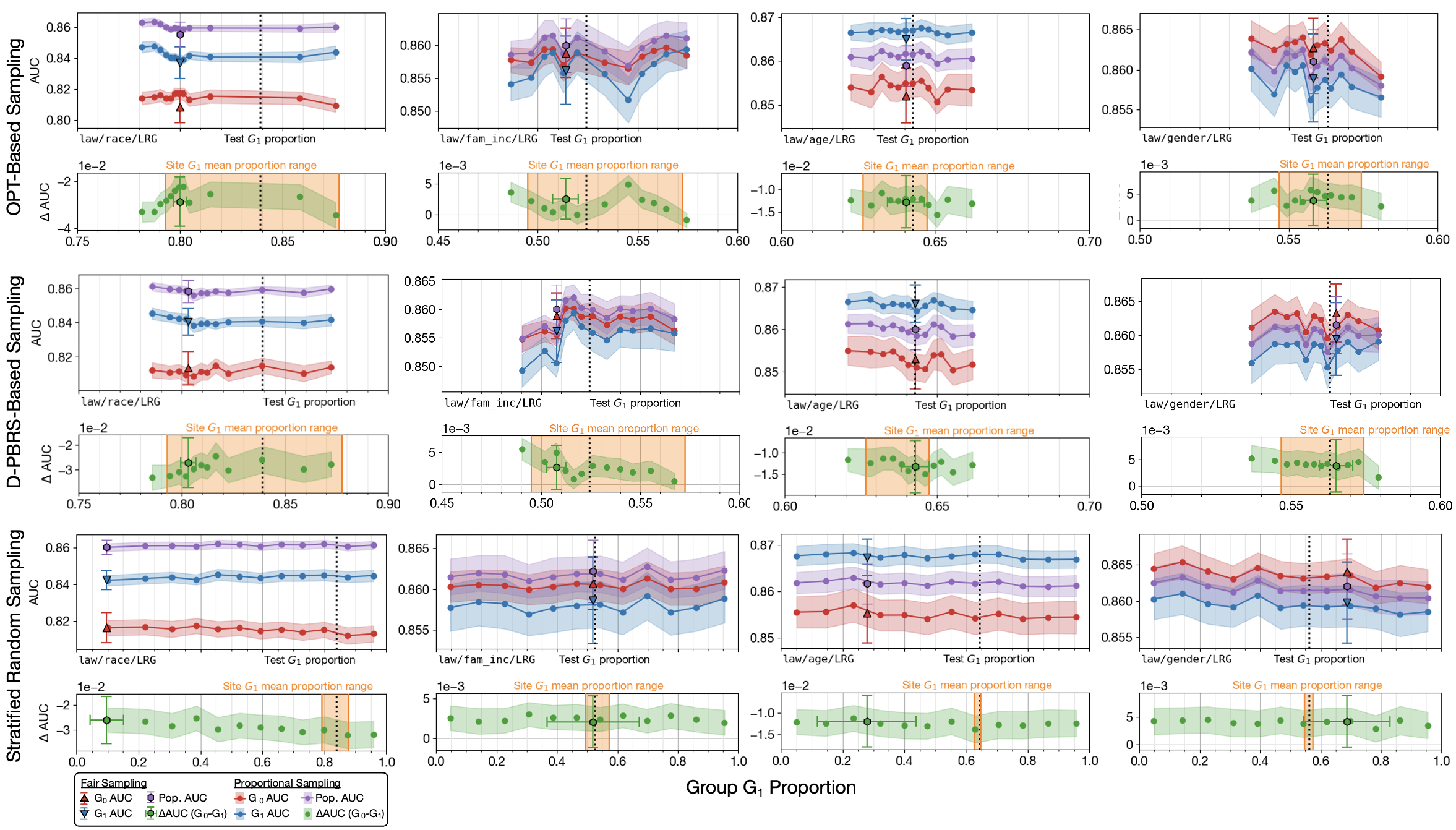}
    \caption{Population (purple) and subgroup (red and blue) AUCs for logistic regression in the Law School dataset.
    Each column represents an analysis studying group proportions by one sensitive feature.
    Green points indicate the difference in subgroup AUCs (AUC$_{G_0} - $AUC$_{G_1}$). 
    Circles and shaded regions indicate quantile means and 95\% CIs for performance of representativeness-based samplers with varying $G_1$ proportions, while outlined triangles and hexagons with error bars indicate means and 95\% CIs for fairness-based samplers.
    The orange shading indicates the range of group $G_1$ proportions at each site.
    The top row shows classifier performance when training datasets are constructed by sampling arms with \textit{OPT}, the middle row for sampling arms with \textit{D-PBRS}, and the bottom row for SRS.}
    \label{fig:law_arm_fairness_LRG}
\end{figure*}

\begin{figure*}[tbh]
    \centering
    \includegraphics[width=1\linewidth]{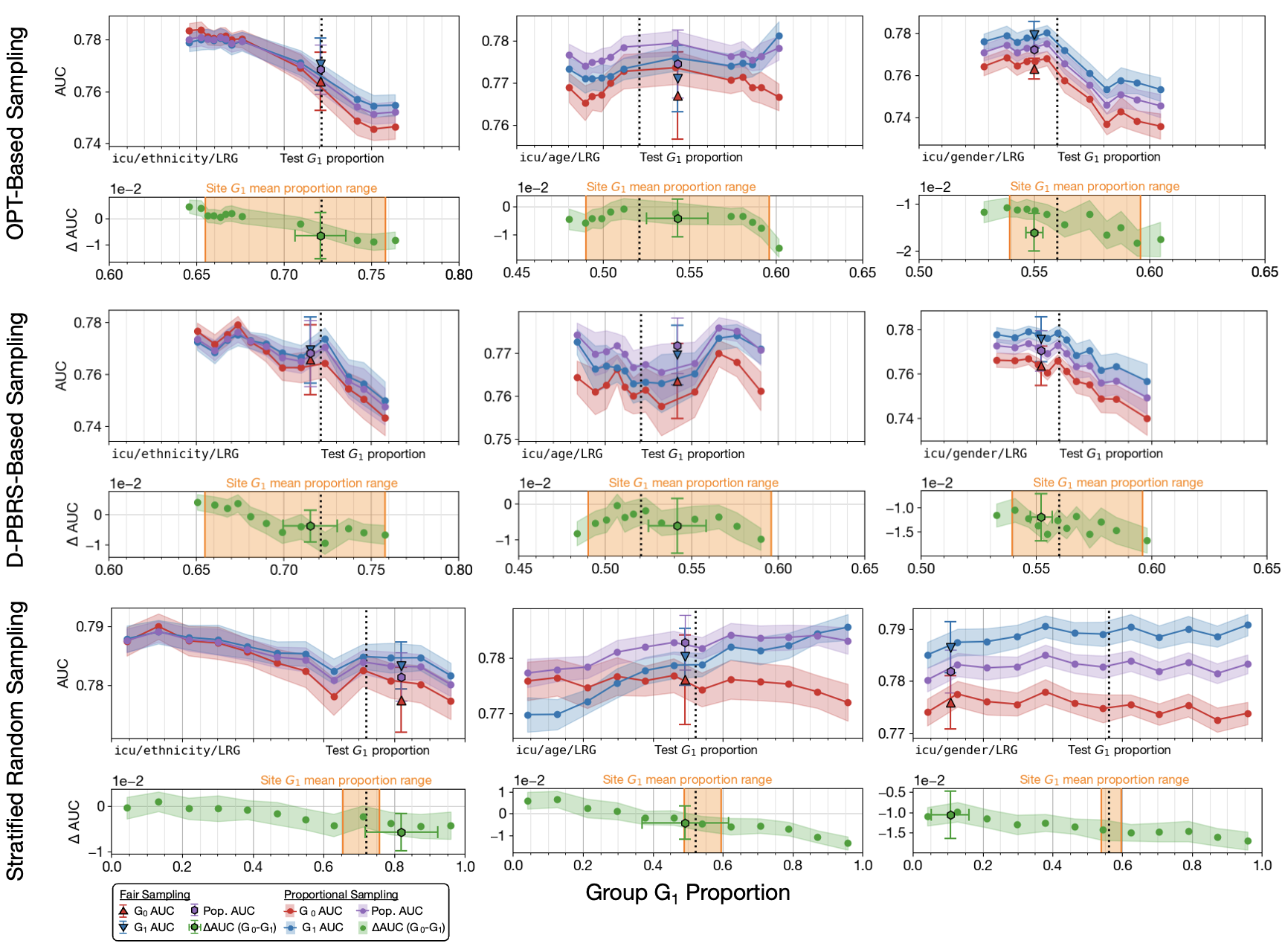}
    \caption{Population (purple) and subgroup (red and blue) AUCs for logistic regression in the Intensive Care dataset.
    Each column represents an analysis studying group proportions by one sensitive feature.
    Green points indicate the difference in subgroup AUCs (AUC$_{G_0} - $AUC$_{G_1}$). 
    Circles and shaded regions indicate quantile means and 95\% CIs for performance of representativeness-based samplers with varying $G_1$ proportions, while outlined triangles and hexagons with error bars indicate means and 95\% CIs for fairness-based samplers.
    The orange shading indicates the range of group $G_1$ proportions at each site.
    The top row shows classifier performance when training datasets are constructed by sampling arms with \textit{OPT}, the middle row for sampling arms with \textit{D-PBRS}, and the bottom row for SRS.}
    \label{fig:icu_arm_fairness_LRG}
\end{figure*}

\begin{figure*}[tbh]
    \centering
    \includegraphics[width=1\linewidth]{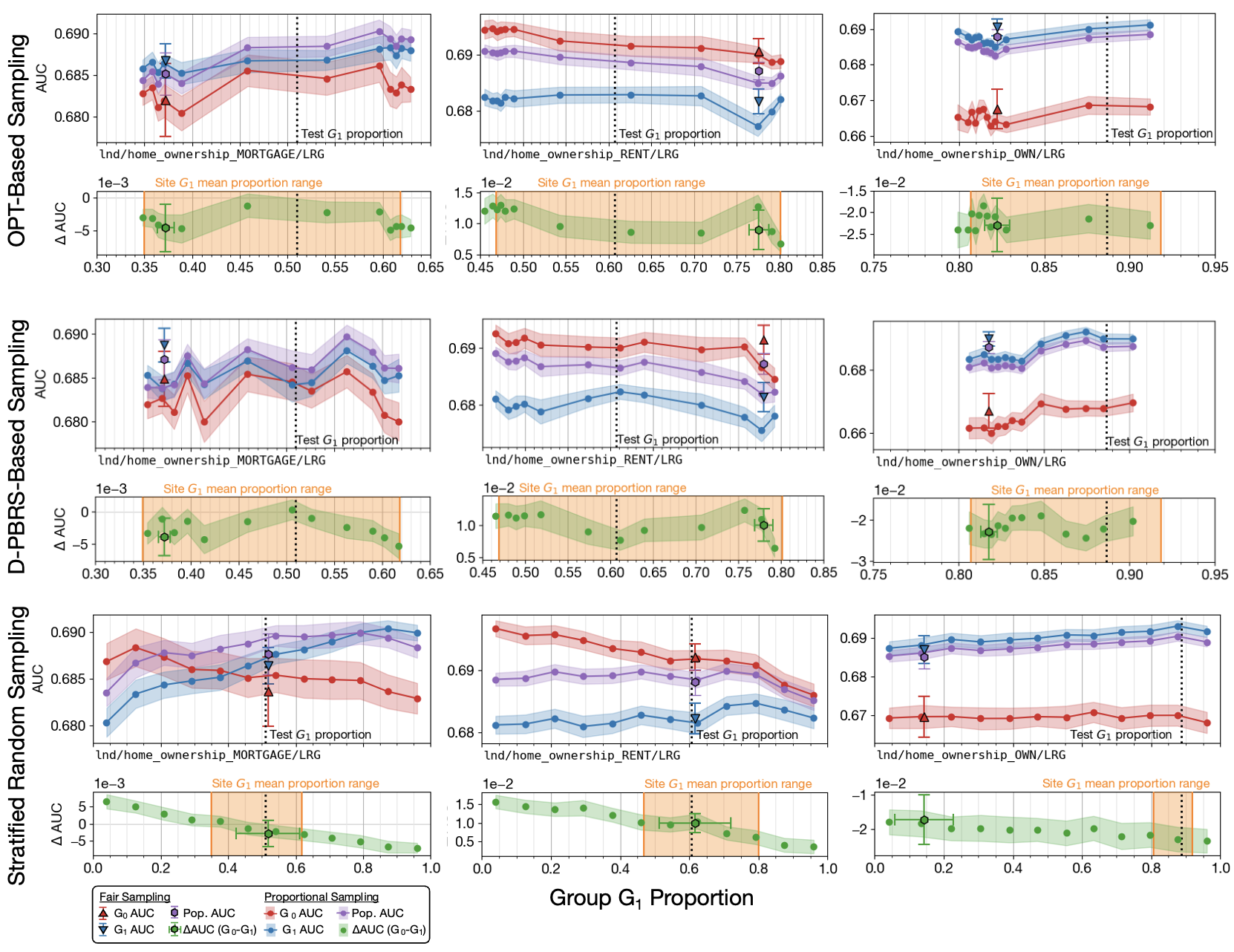}
    \caption{Population (purple) and subgroup (red and blue) AUCs for logistic regression in the Lending Club dataset.
    Each column represents an analysis studying group proportions by one sensitive feature.
    Green points indicate the difference in subgroup AUCs (AUC$_{G_0} - $AUC$_{G_1}$). 
    Circles and shaded regions indicate quantile means and 95\% CIs for performance of representativeness-based samplers with varying $G_1$ proportions, while outlined triangles and hexagons with error bars indicate means and 95\% CIs for fairness-based samplers.
    The orange shading indicates the range of group $G_1$ proportions at each site.
    The top row shows classifier performance when training datasets are constructed by sampling arms with \textit{OPT}, the middle row for sampling arms with \textit{D-PBRS}, and the bottom row for SRS.}
    \label{fig:lnd_arm_fairness_LRG}
\end{figure*}

\begin{figure*}[tbh]
    \centering
    \includegraphics[width=1\linewidth]{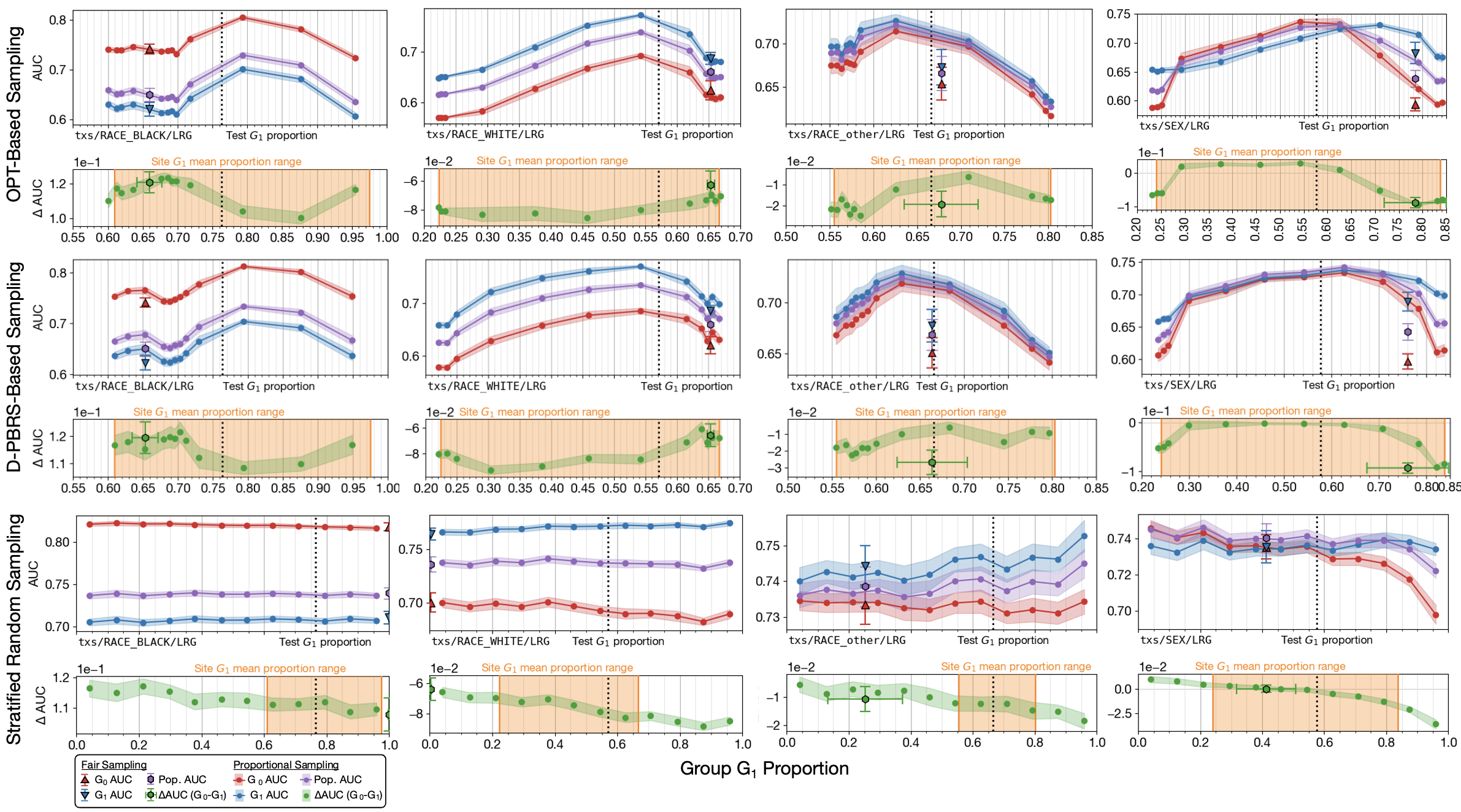}
    \caption{Population (purple) and subgroup (red and blue) AUCs for logistic regression in the Texas Salary dataset.
    Each column represents an analysis studying group proportions by one sensitive feature.
    Green points indicate the difference in subgroup AUCs (AUC$_{G_0} - $AUC$_{G_1}$). 
    Circles and shaded regions indicate quantile means and 95\% CIs for performance of representativeness-based samplers with varying $G_1$ proportions, while outlined triangles and hexagons with error bars indicate means and 95\% CIs for fairness-based samplers.
    The orange shading indicates the range of group $G_1$ proportions at each site.
    The top row shows classifier performance when training datasets are constructed by sampling arms with \textit{OPT}, the middle row for sampling arms with \textit{D-PBRS}, and the bottom row for SRS.}
    \label{fig:txs_arm_fairness_LRG}
\end{figure*}

We present analyses for the population and group-wise accuracy of classifiers trained on datasets which vary in proportion of each sensitive feature for the arm sampling data domains not included in the main body (Law School, Lending Club, and Texas Salary).
Figures \ref{fig:law_arm_fairness_GBC}-\ref{fig:txs_arm_fairness_LRG} show population, and group-wise, performance as a function the fraction of samples in the training data which are from $G_1$ (shown on each plot).
For each dataset we present two analyses: one with a gradient boosted classifier (GBC) and one with a logistic regression (LRG) classifier.
Each figure shows three sampling strategies: \textit{OPT}, \textit{D-PBRS}, and SRS, paralleling the methods and results for figure \ref{fig:icu_univariate} from the main body.
As discussed in the main body, there are two key observations in these figures.
First, an increase in the representation of a given group does not always significantly improve downstream performance on that group even in the SRS case, e.g., Black race in the Texas Salary dataset (Fig. \ref{fig:txs_arm_fairness_GBC} bottom left).
However, in other cases improved representation results in better performance for that group, e.g., Sex $G_0$ in the Texas Salary dataset (Fig. \ref{fig:txs_arm_fairness_GBC} bottom right).
Second, the sampling method plays a crucial role in the relationship between downstream fairness and representation.
The arm-based sampling methods \textit{OPT} and \textit{D-PBRS} often show very different subgroup performance than SRS (Fig. \ref{fig:txs_arm_fairness_GBC})
Overall, results of the analyses with logistic regression exhibit similar results patterns to those for gradient boosted classifiers.

\subsection{Fairness and Complexity in Other Datasets}

\subsubsection{Baseline Unfairness}
\begin{figure*}[tbh]
    \centering
    \includegraphics[width=1\linewidth]{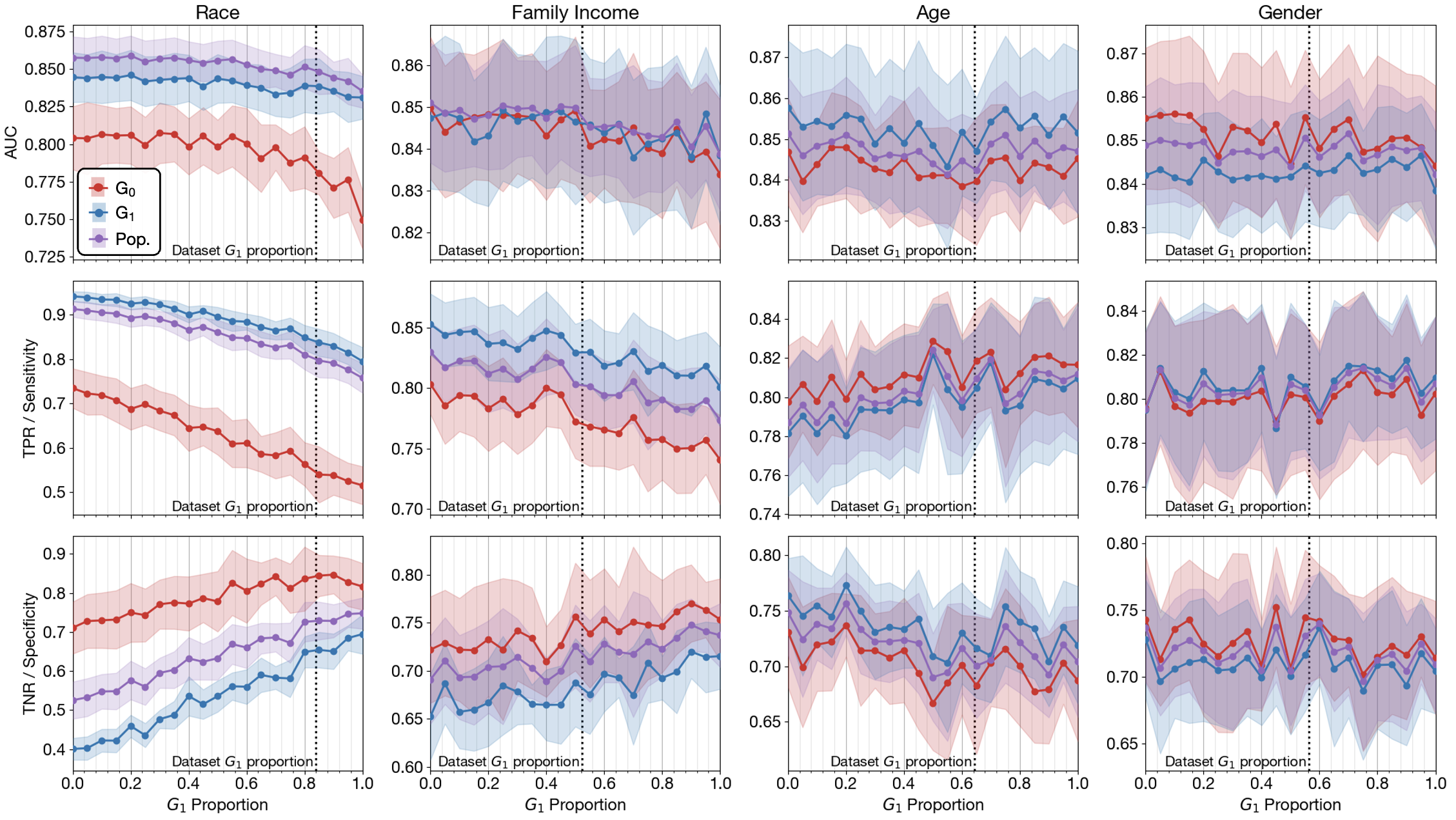}
    \caption{Population (purple) and subgroup (red and blue) AUCs, TPRs, and TNRs for gradient boosted classifiers in the non-arm-based Law School dataset.
    Circles and shaded regions indicate quantile means and 95\% CIs for performance of representativeness-based samplers with varying $G_1$ proportions.
    Each column represents an analysis studying group proportions by one sensitive feature while each row indicates a different classifier performance measure.}
    \label{fig:law_non_arm_fairness}
\end{figure*}

\begin{figure*}[tbh]
    \centering
    \includegraphics[width=0.35\linewidth]{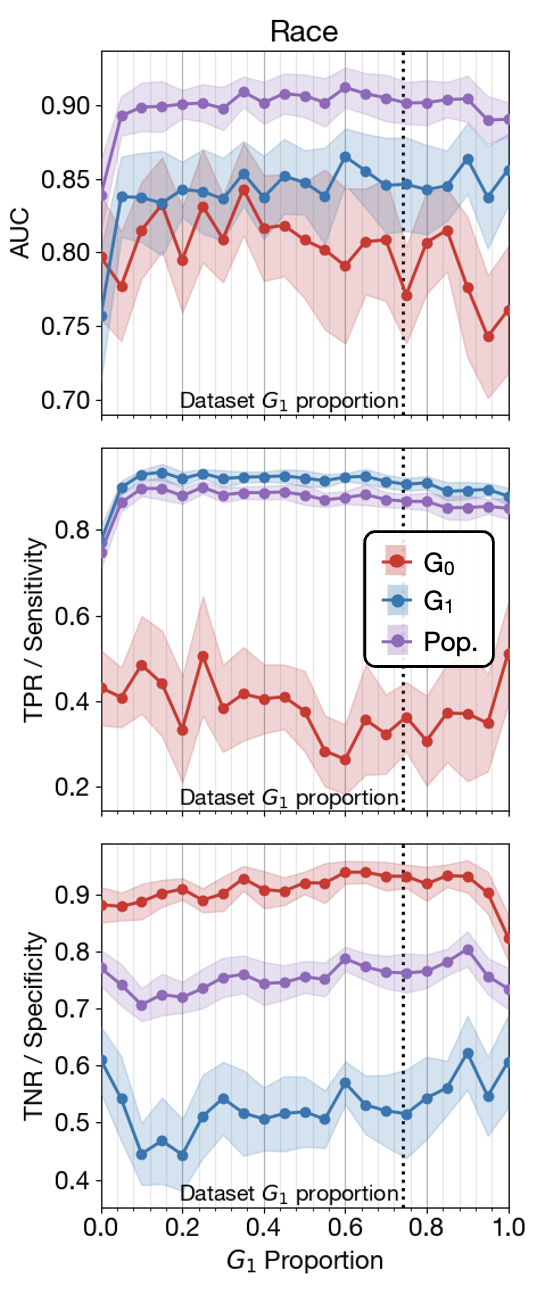}
    \caption{Population (purple) and subgroup (red and blue) AUCs, TPRs, and TNRs for gradient boosted classifiers in the Community Crime dataset.
    Circles and shaded regions indicate quantile means and 95\% CIs for performance of representativeness-based samplers with varying $G_1$ proportions.
    Each row indicates a different classifier performance measure.}
    \label{fig:crm_fairness}
\end{figure*}

We present analyses of population and group-wise accuracy, true positive rates, and true negative rates of classifiers trained on the remaining datasets known to have unfairness that are not included in the main body (Law School, Community Crime).
The methodology and presentation of these results parallels main body figure \ref{fig:adl_fairness}.
There is significant TPR and TNR unfairness for groups determined by race in both datasets (Figs. \ref{fig:law_non_arm_fairness} and \ref{fig:crm_fairness}).

\subsubsection{Complexity Analysis} \label{sup:complexity_analysis}

\begin{figure*}[tbh]
    \centering
    \includegraphics[width=0.95\linewidth]{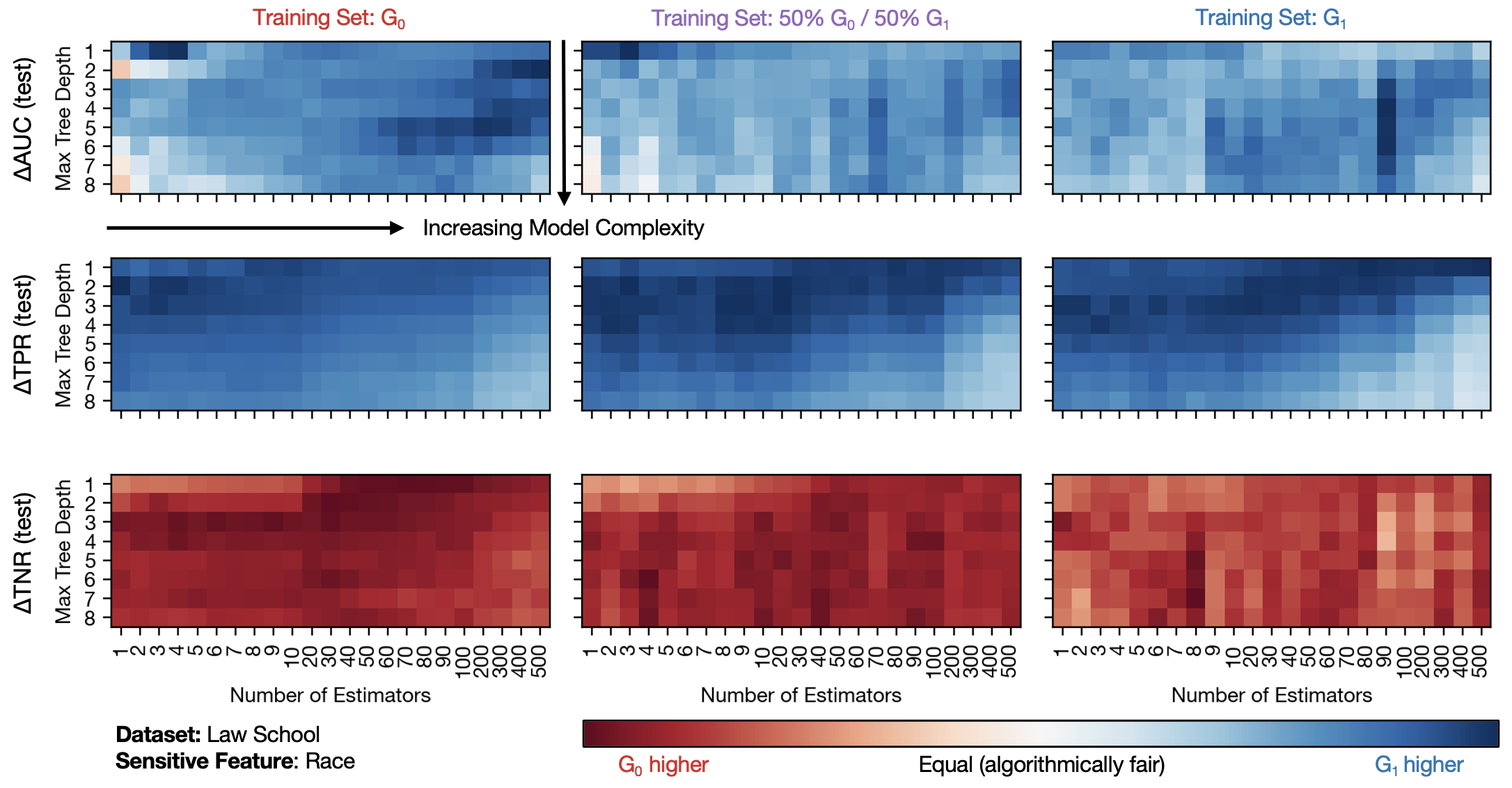}
    \caption{GBC unfairness for the Law School dataset treating race as the sensitive feature of interest.
    Darker red and blue colors indicate disparate performance favoring group $G_0$ and $G_1$, respectively, while paler colors indicate measure parity (fairness).
    Within each subfigure, rows represent maximum individual tree depths and columns indicate numbers of estimation steps.}
    \label{fig:law_complexity_heatmap_race}
\end{figure*}

\begin{figure*}[tbh]
    \centering
    \includegraphics[width=0.95\linewidth]{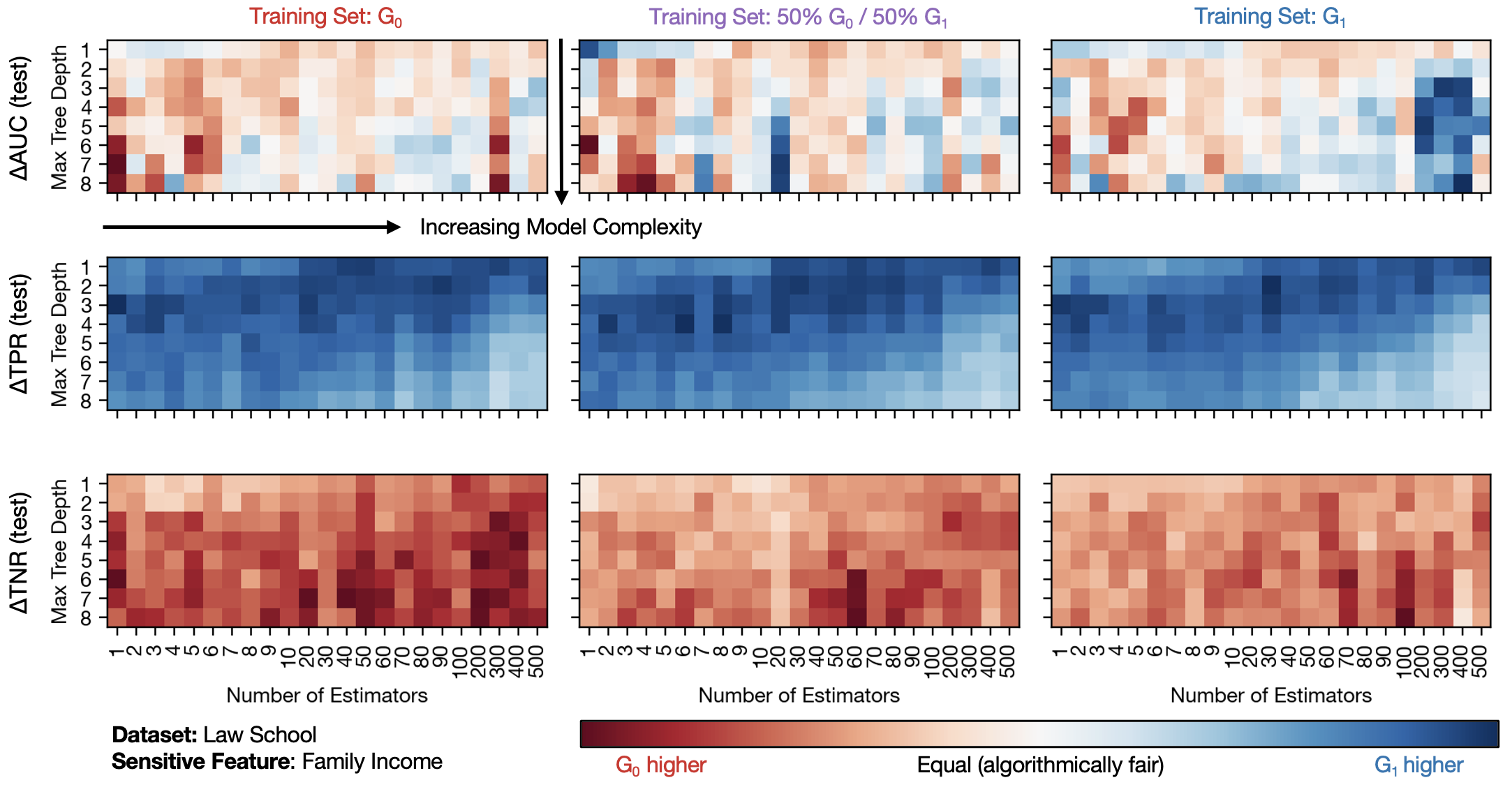}
    \caption{GBC unfairness for the Law School dataset treating family income as the sensitive feature of interest.
    Darker red and blue colors indicate disparate performance favoring group $G_0$ and $G_1$, respectively, while paler colors indicate measure parity (fairness).
    Within each subfigure, rows represent maximum individual tree depths and columns indicate numbers of estimation steps.}
    \label{fig:law_complexity_heatmap_fam_inc}
\end{figure*}

\begin{figure*}[tbh]
    \centering
    \includegraphics[width=0.95\linewidth]{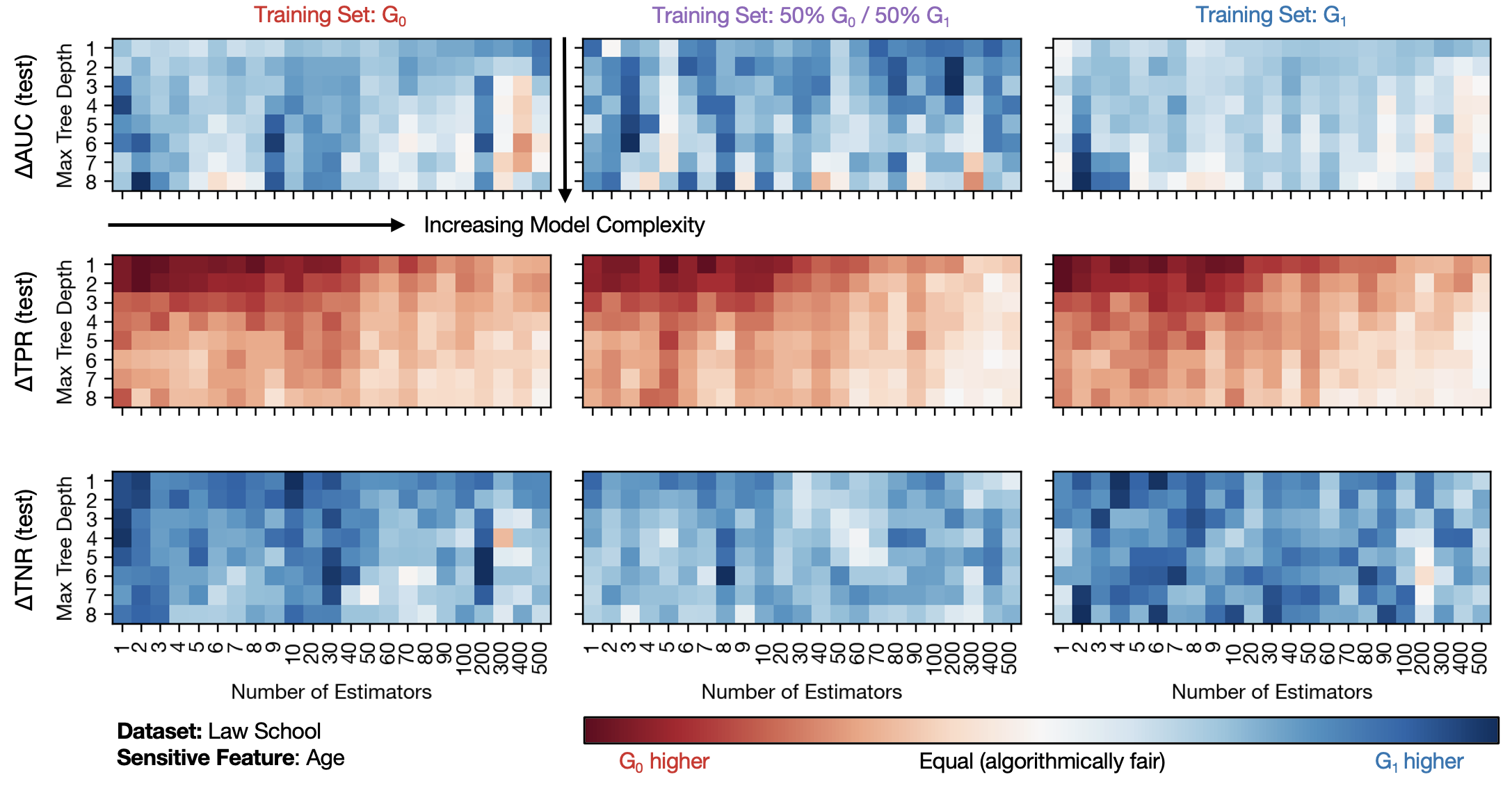}
    \caption{GBC unfairness for the Law School dataset treating age as the sensitive feature of interest.
    Darker red and blue colors indicate disparate performance favoring group $G_0$ and $G_1$, respectively, while paler colors indicate measure parity (fairness).
    Within each subfigure, rows represent maximum individual tree depths and columns indicate numbers of estimation steps.}
    \label{fig:law_complexity_heatmap_age}
\end{figure*}

\begin{figure*}[tbh]
    \centering
    \includegraphics[width=0.95\linewidth]{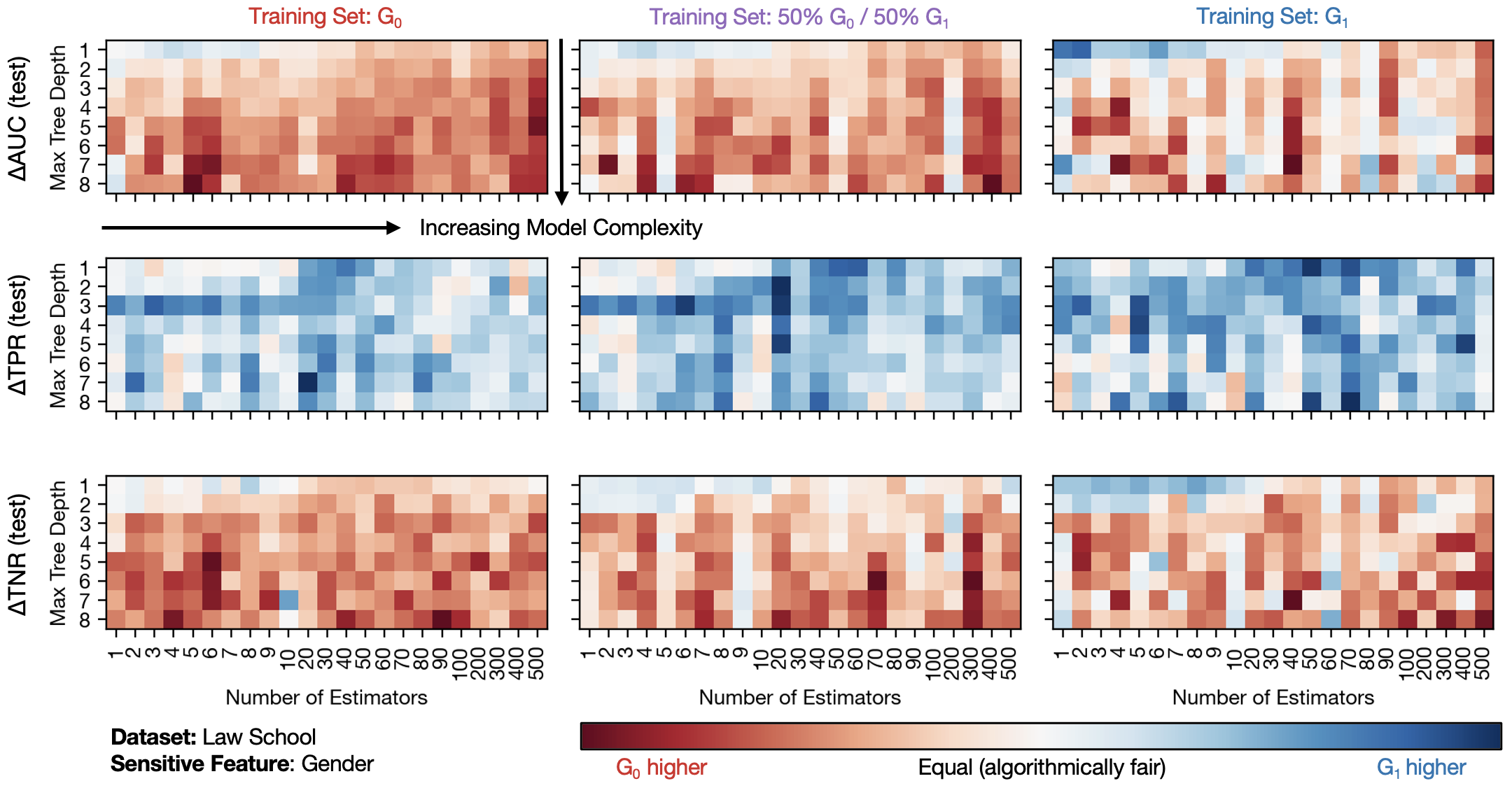}
    \caption{GBC unfairness for the Law School dataset treating race as the sensitive feature of interest.
    Darker red and blue colors indicate disparate performance favoring group $G_0$ and $G_1$, respectively, while paler colors indicate measure parity (fairness).
    Within each subfigure, rows represent maximum individual tree depths and columns indicate numbers of estimation steps.}
    \label{fig:law_complexity_heatmap_gender}
\end{figure*}

\begin{figure*}[tbh]
    \centering
    \includegraphics[width=0.95\linewidth]{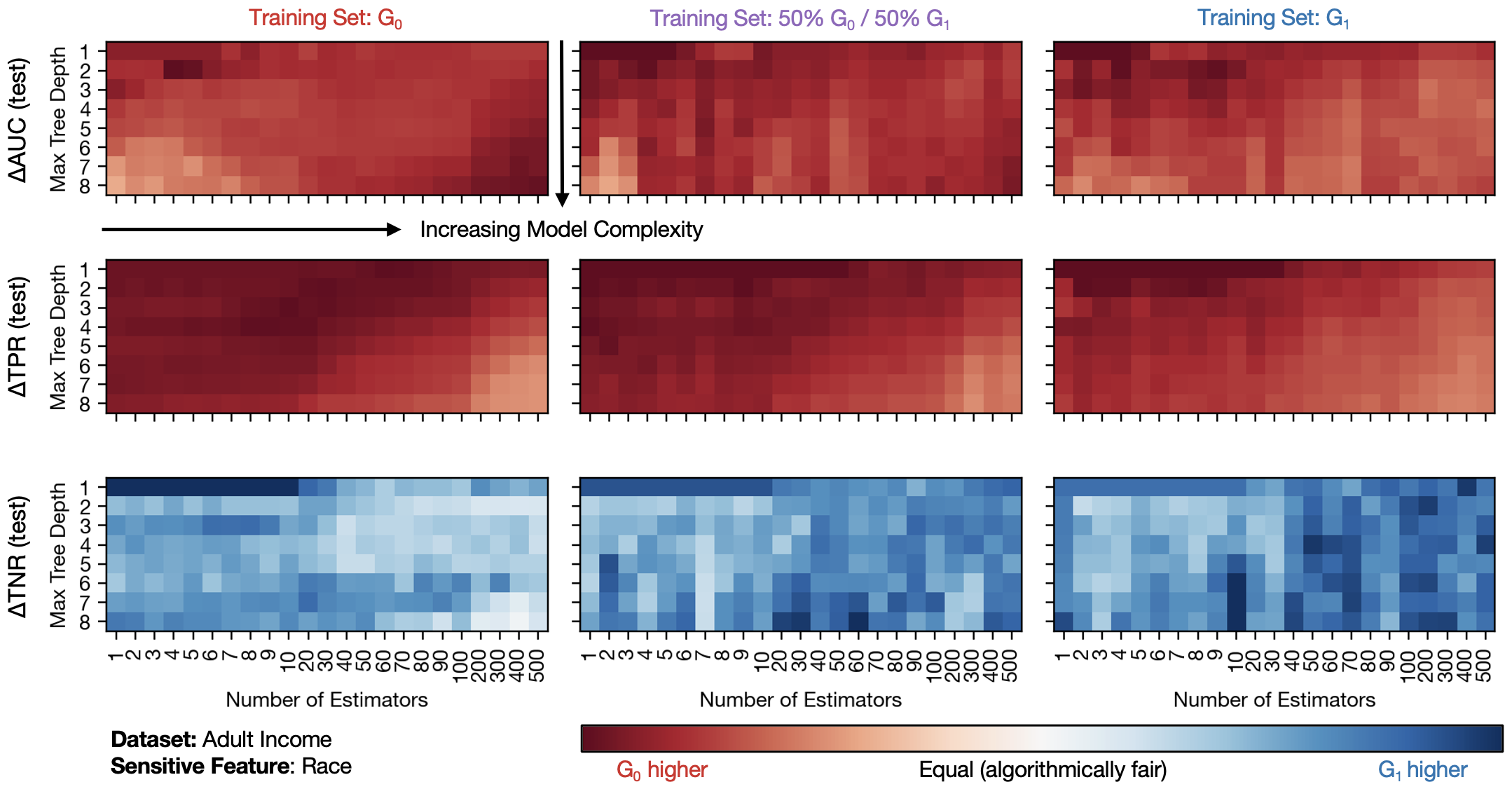}
    \caption{GBC unfairness for the Adult Income dataset treating race as the sensitive feature of interest.
    Darker red and blue colors indicate disparate performance favoring group $G_0$ and $G_1$, respectively, while paler colors indicate measure parity (fairness).
    Within each subfigure, rows represent maximum individual tree depths and columns indicate numbers of estimation steps.}
    \label{fig:adl_complexity_heatmap_race}
\end{figure*}

\begin{figure*}[tbh]
    \centering
    \includegraphics[width=0.95\linewidth]{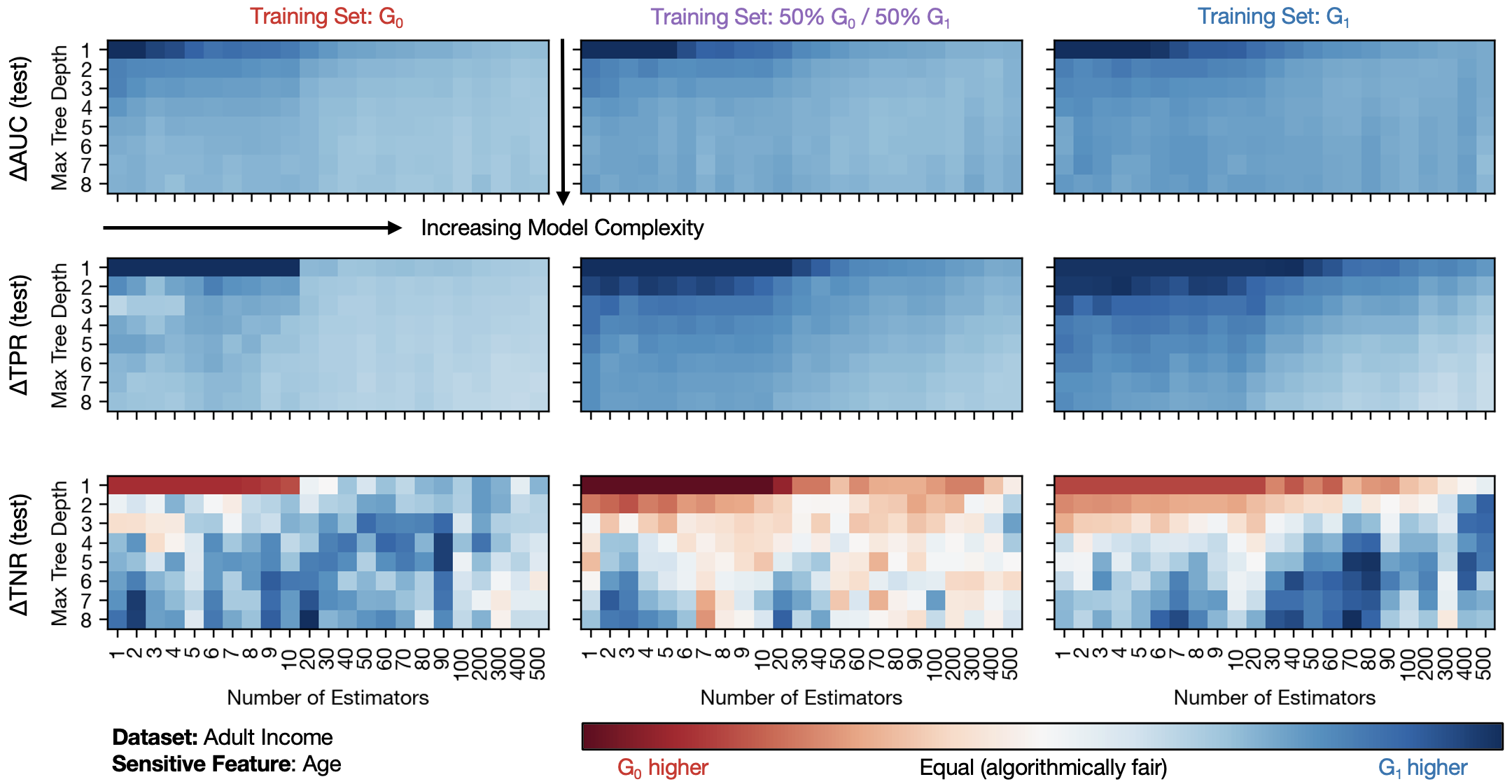}
    \caption{GBC unfairness for the Adult Income dataset treating age as the sensitive feature of interest.
    Darker red and blue colors indicate disparate performance favoring group $G_0$ and $G_1$, respectively, while paler colors indicate measure parity (fairness).
    Within each subfigure, rows represent maximum individual tree depths and columns indicate numbers of estimation steps.}
    \label{fig:adl_complexity_heatmap_age}
\end{figure*}

\begin{figure*}[tbh]
    \centering
    \includegraphics[width=0.95\linewidth]{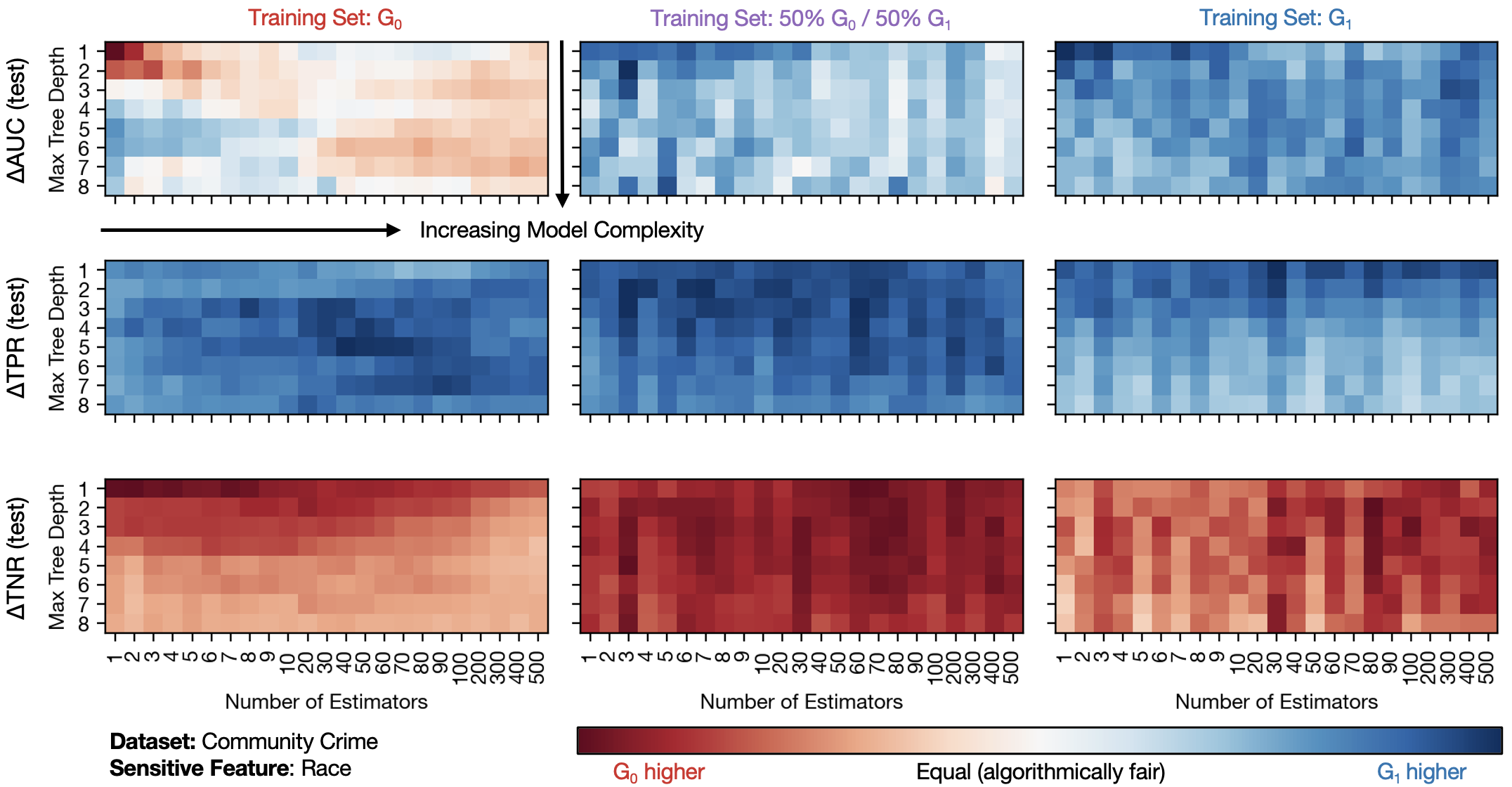}
    \caption{GBC unfairness for the Community Crime dataset treating race as the sensitive feature of interest.
    Darker red and blue colors indicate disparate performance favoring group $G_0$ and $G_1$, respectively, while paler colors indicate measure parity (fairness).
    Within each subfigure, rows represent maximum individual tree depths and columns indicate numbers of estimation steps.}
    \label{fig:crm_complexity_heatmap_race}
\end{figure*}

We include results for complexity analyses of all sensitive features on all datasets known to have unfairness (Law School, Adult Income, Community Crime).
The methodology and presentations of these results parallels main body figure \ref{fig:adl_gender_complexity}.
In general, increased model complexity yields better AUC, TPR, and/or TNR parity --- thus, fairer models (Figs. \ref{fig:law_complexity_heatmap_race}, \ref{fig:law_complexity_heatmap_fam_inc}, \ref{fig:law_complexity_heatmap_age}, \ref{fig:adl_complexity_heatmap_race}, and \ref{fig:adl_complexity_heatmap_age}).
However, there are a couple cases where increasing model complexity does not significantly improve fairness (Figs. \ref{fig:law_complexity_heatmap_gender} and \ref{fig:crm_complexity_heatmap_race}).
Nevertheless, it does not appear that increasing model complexity \textit{harms} fairness, making it at least a potentially beneficial intervention from a fairness perspective.

\subsubsection{Performance and Complexity Analysis} \label{sup:perf_complexity_analysis}
\begin{figure}[tbh]
    \centering
    \includegraphics[width=1\linewidth]{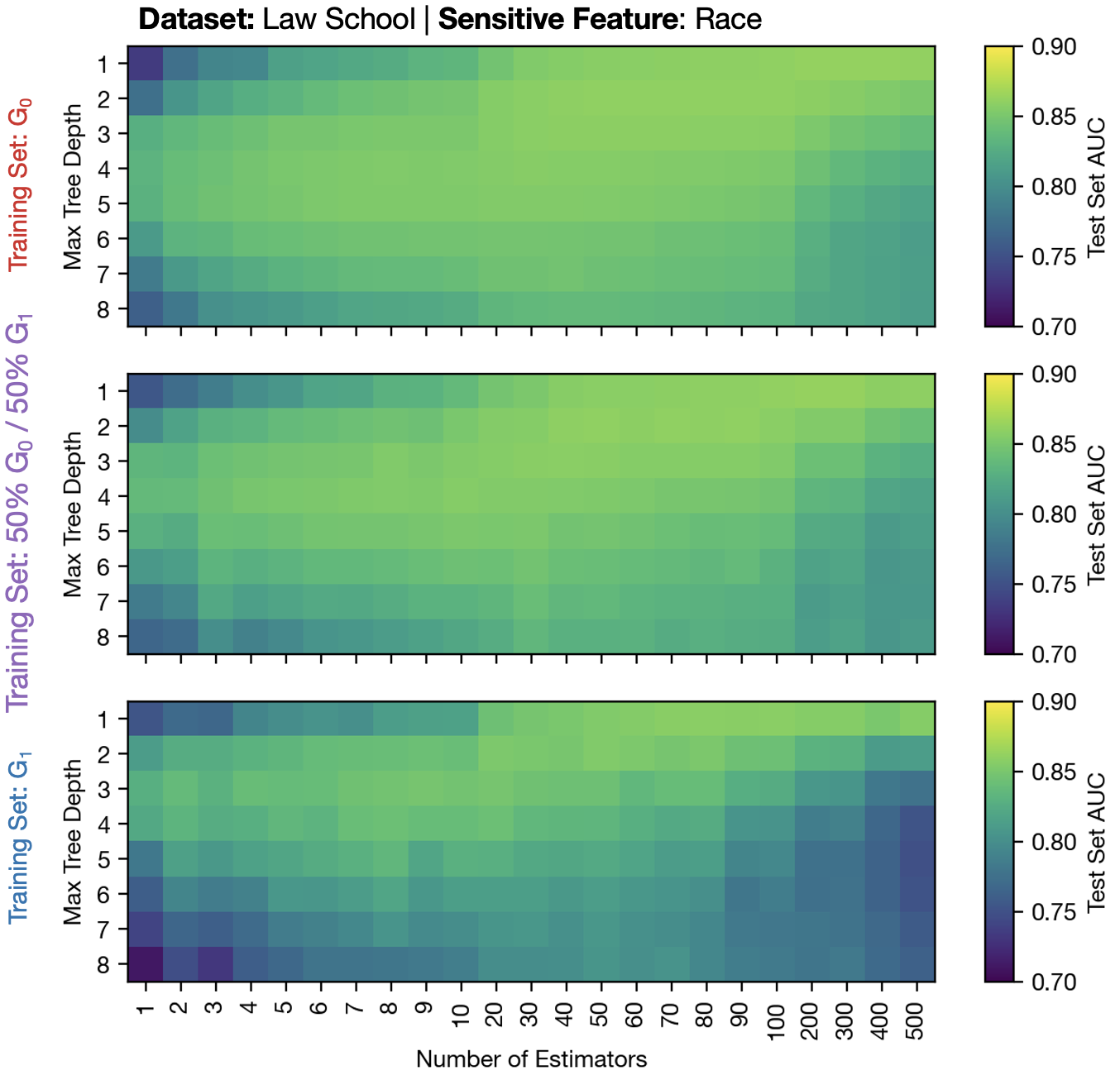}
    \caption{Heatmaps show total test set AUCs across different training subgroups in the Law School dataset treating race as the sensitive feature of interest.
    }
    \label{fig:law_race_auc}
\end{figure}

\begin{figure}[tbh]
    \centering
    \includegraphics[width=1\linewidth]{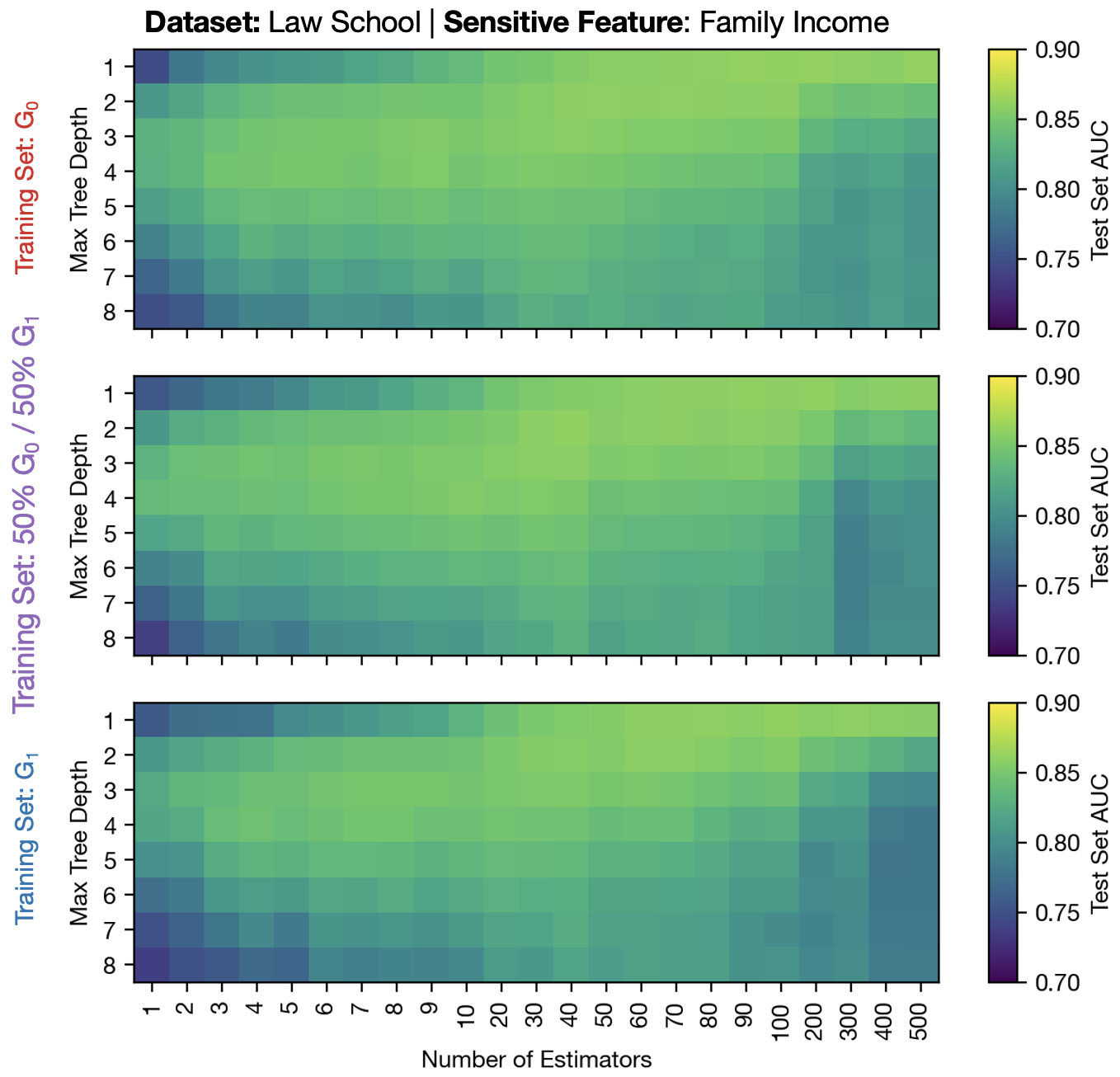}
    \caption{Heatmaps show total test set AUCs across different training subgroups in the Law School dataset treating family income as the sensitive feature of interest.
    }
    \label{fig:law_fam_inc_auc}
\end{figure}

\begin{figure}[tbh]
    \centering
    \includegraphics[width=1\linewidth]{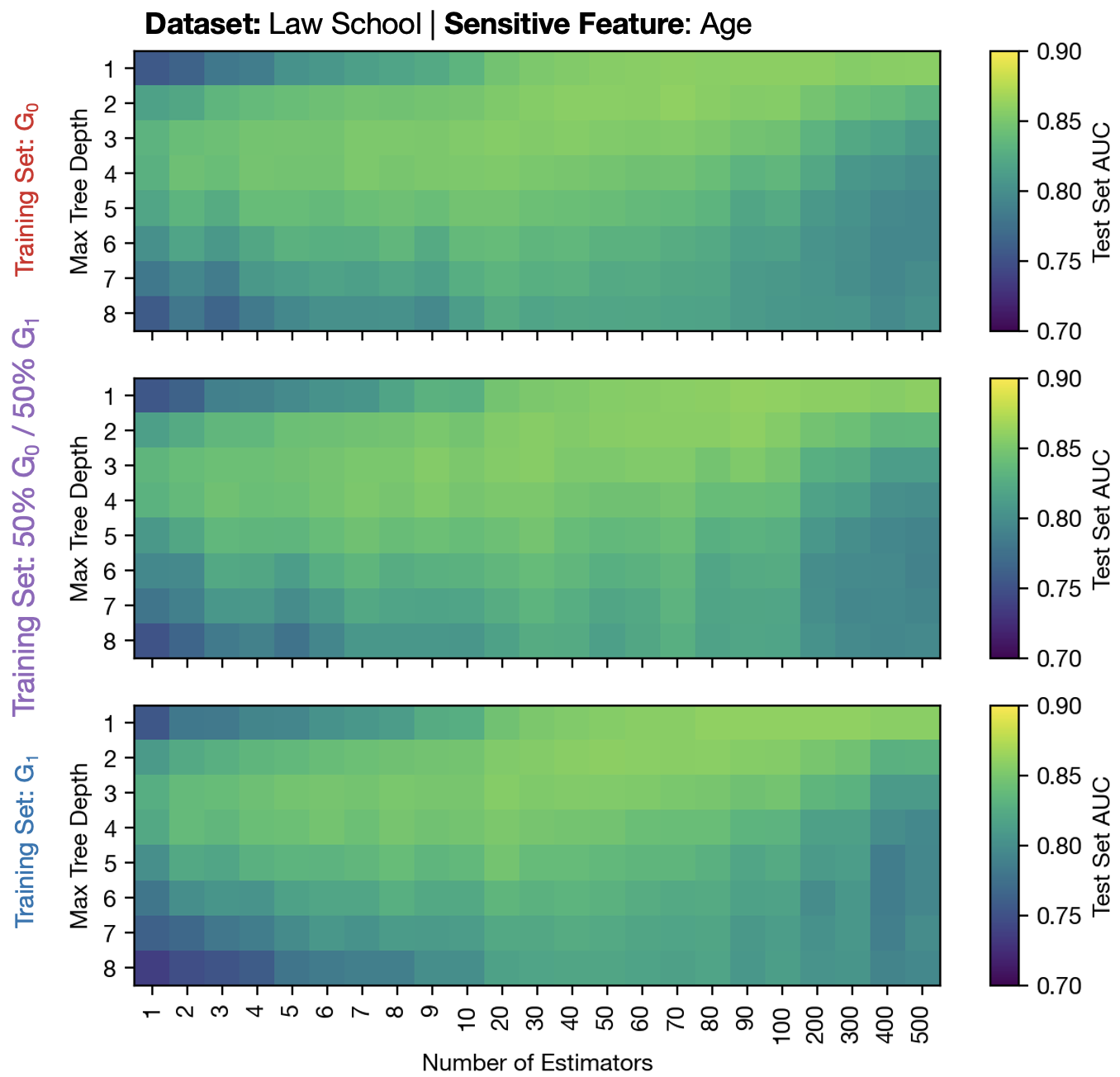}
    \caption{Heatmaps show total test set AUCs across different training subgroups in the Law School dataset treating age as the sensitive feature of interest.
    }
    \label{fig:law_age_auc}
\end{figure}

\begin{figure}[tbh]
    \centering
    \includegraphics[width=1\linewidth]{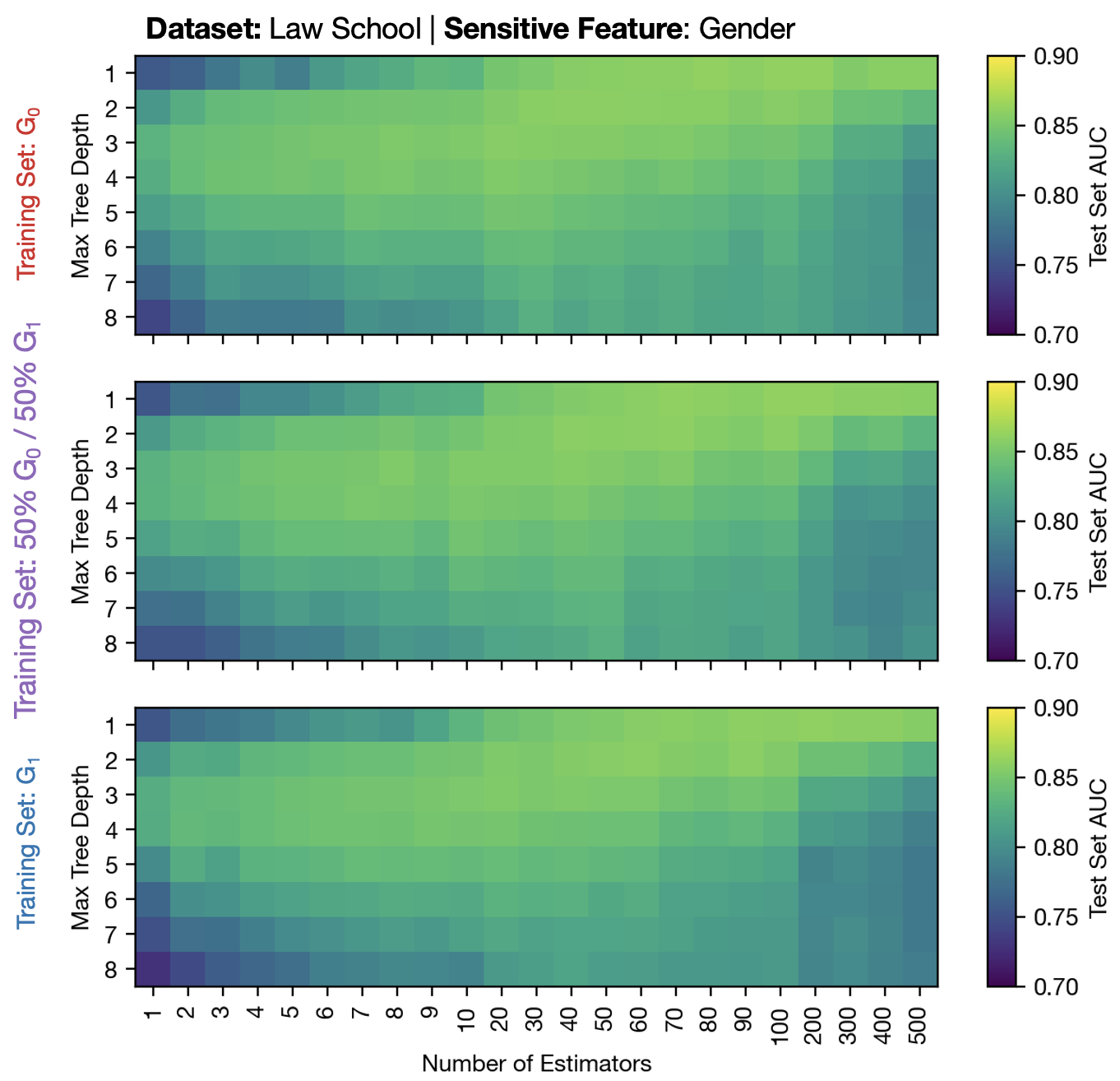}
    \caption{Heatmaps show total test set AUCs across different training subgroups in the Law School dataset treating gender as the sensitive feature of interest.
    }
    \label{fig:law_gender_auc}
\end{figure}

\begin{figure}[tbh]
    \centering
    \includegraphics[width=1\linewidth]{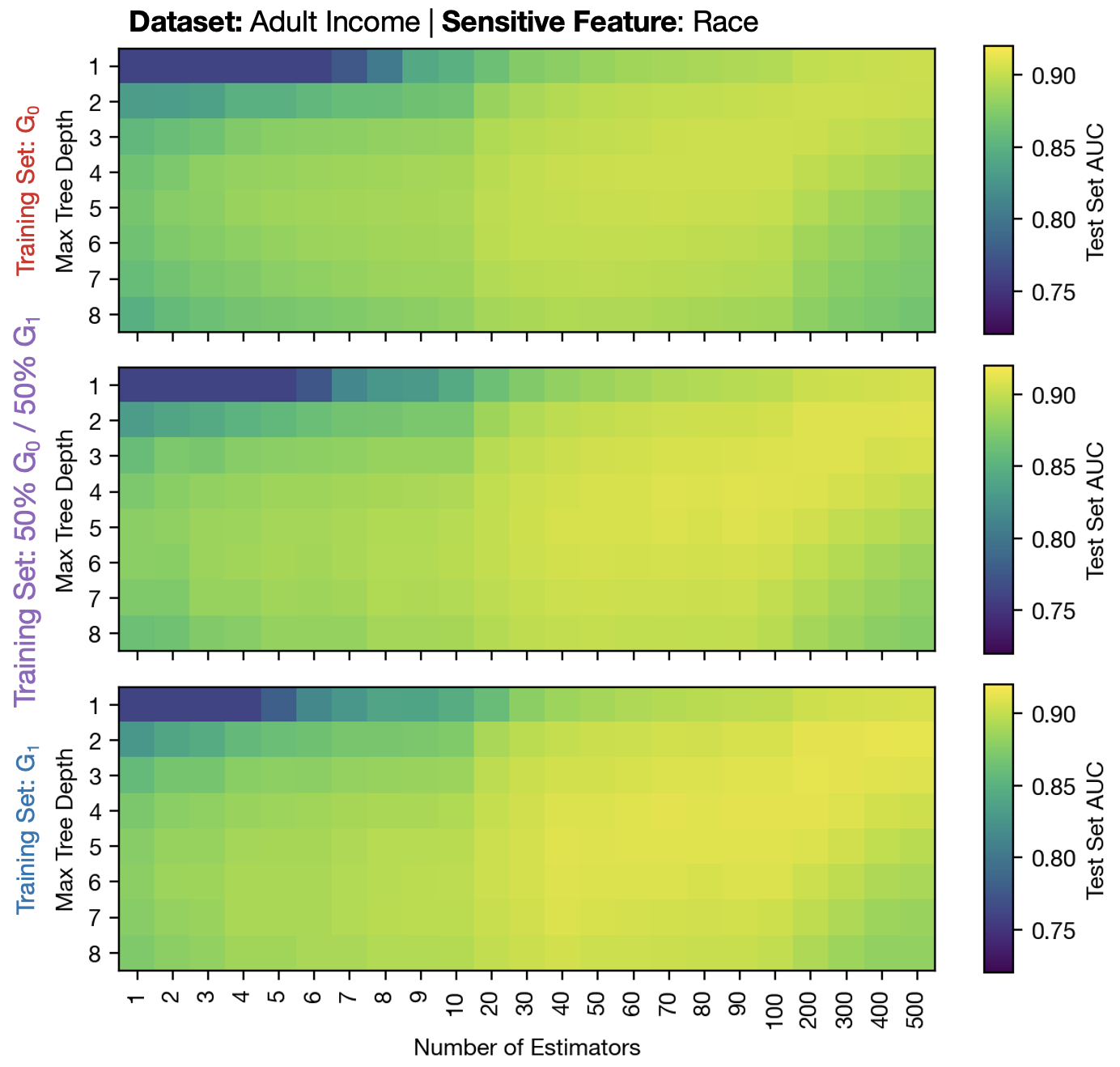}
    \caption{Heatmaps show total test set AUCs across different training subgroups in the Adult Income dataset treating race as the sensitive feature of interest.
    }
    \label{fig:adl_race_auc}
\end{figure}

\begin{figure}[tbh]
    \centering
    \includegraphics[width=1\linewidth]{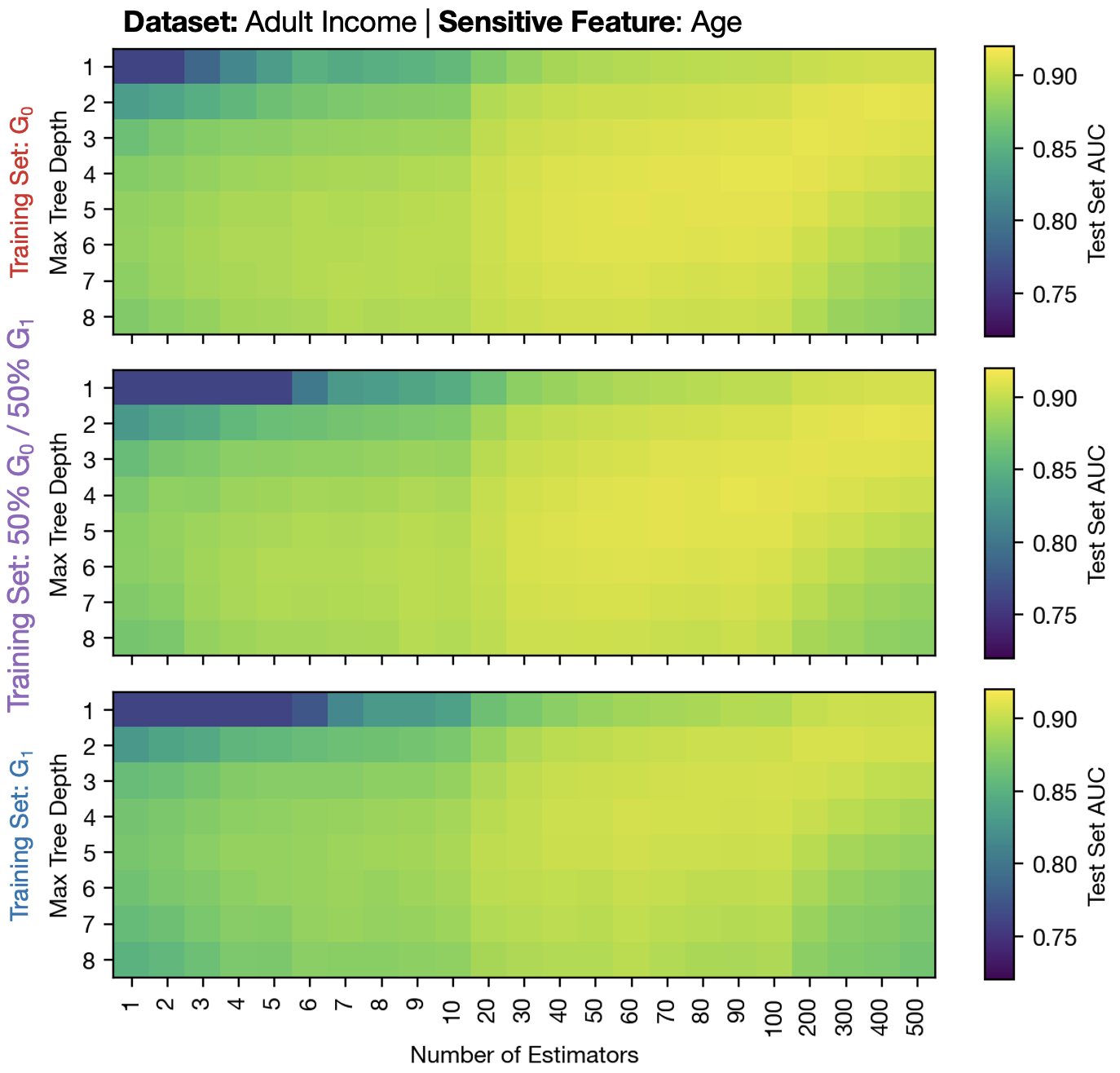}
    \caption{Heatmaps show total test set AUCs across different training subgroups in the Adult Income dataset treating age as the sensitive feature of interest.
    }
    \label{fig:adl_age_auc}
\end{figure}

\begin{figure}[tbh]
    \centering
    \includegraphics[width=1\linewidth]{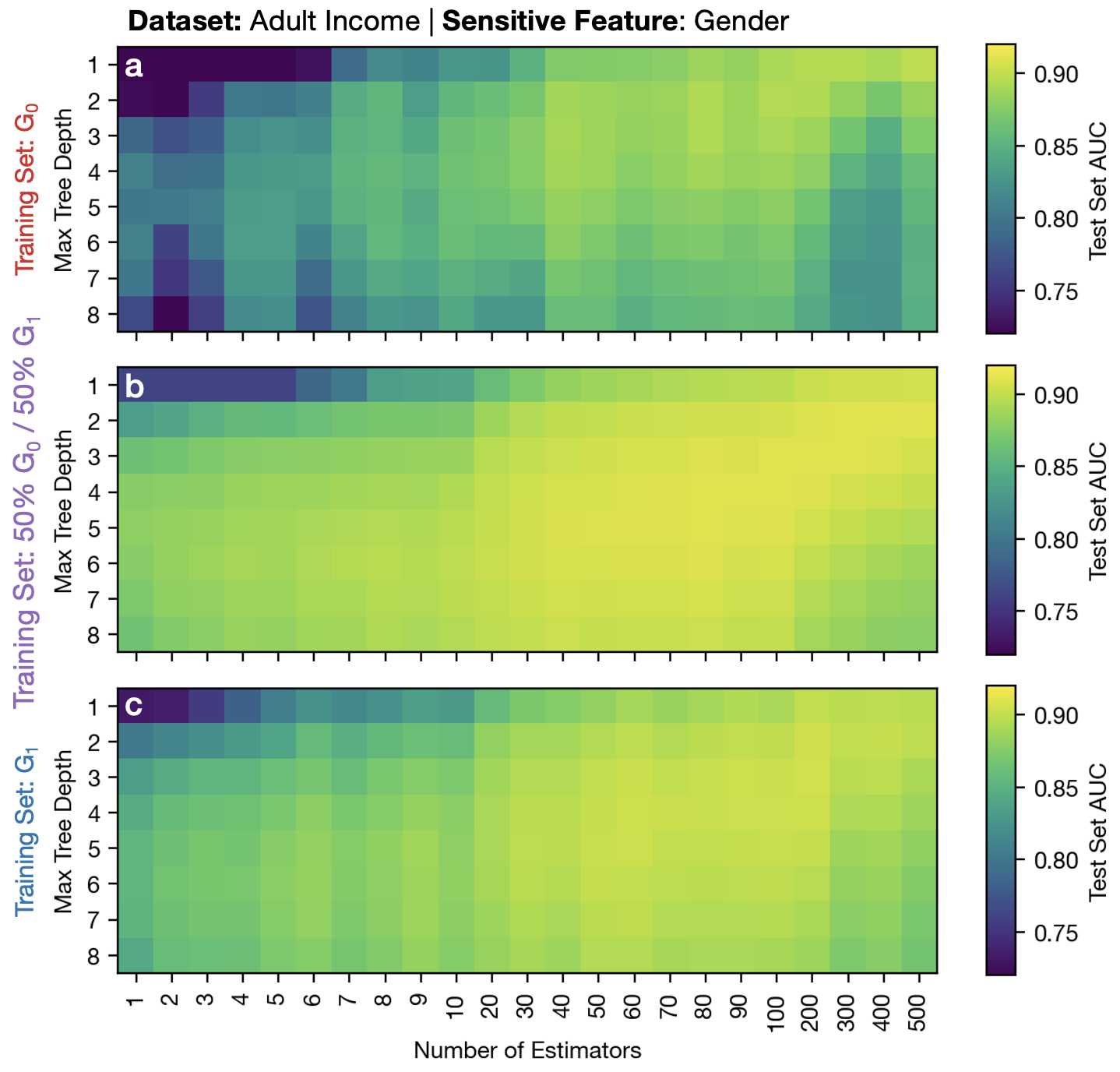}
    \caption{Heatmaps show total test set AUCs across different training subgroups in the Adult Income dataset treating gender as the sensitive feature of interest.
    }
    \label{fig:adl_gender_auc}
\end{figure}

\begin{figure}[tbh]
    \centering
    \includegraphics[width=1\linewidth]{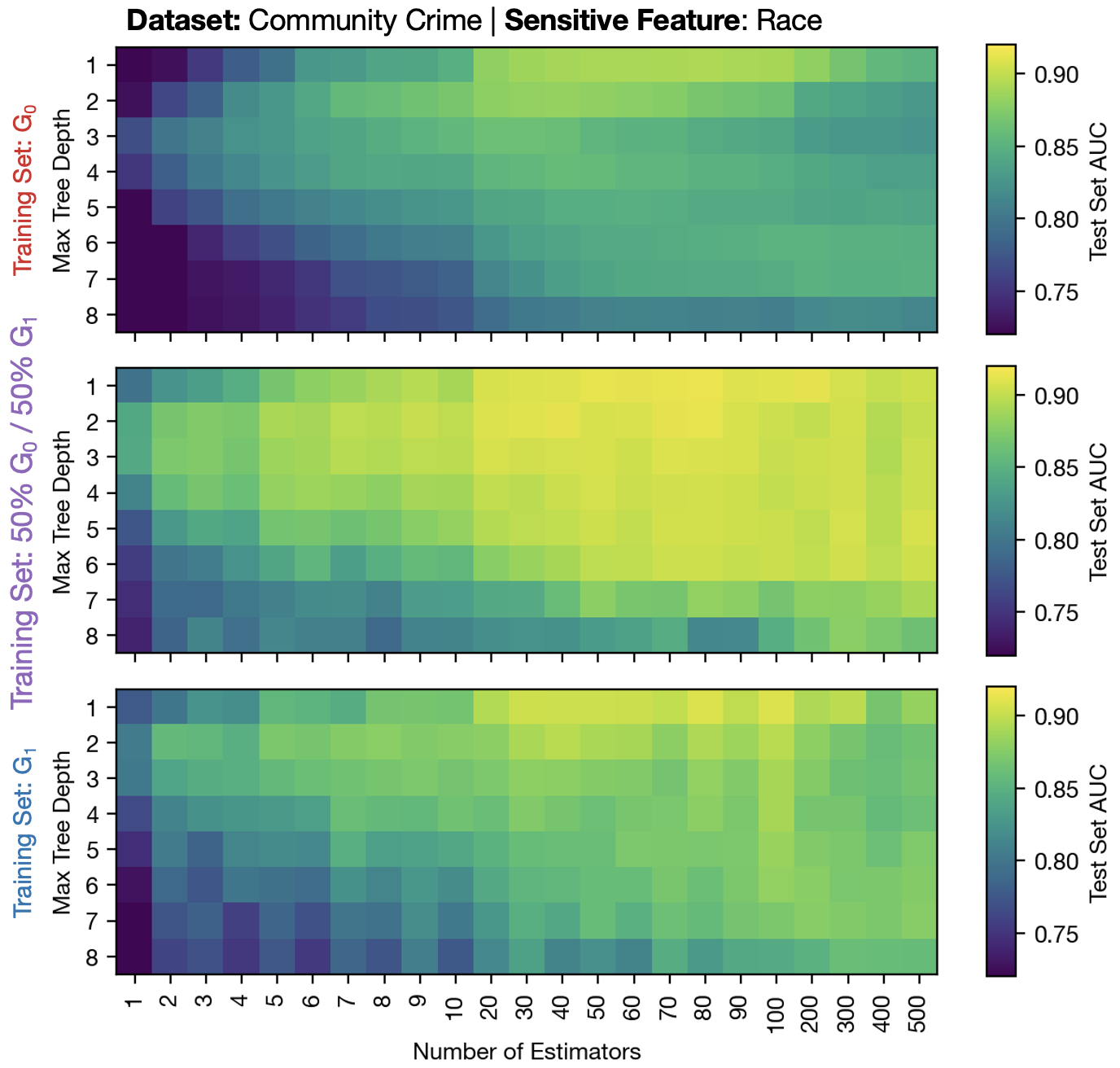}
    \caption{Heatmaps show total test set AUCs across different training subgroups in the Community Crime dataset treating race as the sensitive feature of interest.
    }
    \label{fig:crm_race_auc}
\end{figure}

As discussed in the main body, improvements in algorithmic fairness can often come at the cost of overall classifier performance.
To analyze whether any fairness gains we see from increased model complexity harm classifier performance, we show the overall test set AUC of models with varying complexity.
Each figure in this section parallels a figure in appendix \ref{sup:complexity_analysis} or main body figure \ref{fig:adl_gender_complexity}.
In general, overall classifier AUC does not substantially degrade with increasing model complexity.
The highest complexity levels (estimators $\geq 200$ and depth $\geq 5$) sometimes show moderate degradation in performance (Fig. \ref{fig:law_race_auc}).
However, substantial fairness gains can be realized at lower complexity levels (matched Fig. \ref{fig:law_complexity_heatmap_race}).

\end{document}